\newcounter{algsubstate}
\renewcommand{\thealgsubstate}{\alph{algsubstate}}
\newenvironment{algsubstates}
  {\setcounter{algsubstate}{0}%
   \renewcommand{\State}{%
     \stepcounter{algsubstate}%
     \Statex {\small\thealgsubstate:}\space}}
  {}
\newcommand{\softthresha}[2]{\textrm{sign}(#1)\cdot(|#1|-#2)^+}
\newcommand{\softthreshb}[2]{\textrm{sign}(#1)\cdot#2}
\newcommand{\Wgl}{\bm{W_{\textrm{PL}}}}
\newcommand{\Wglhat}{\bm{\hat{W}_{\textrm{PL}}}}
\newcommand{\R}[0]{\mathbb{R}}
\newcommand{\Rr}[0]{\left(\mathbb{R}_{\geq 0}\right)}
\newtheorem{proposition}{Proposition}
\newtheorem{definition}{Definition}
\newtheorem{lemma}{Lemma}
\begin{document}
\title{Non-linear, Sparse Dimensionality Reduction via Path Lasso Penalized Autoencoders}

\author{
       Oskar Allerbo\\
       \texttt{allerbo@chalmers.se}\\
       Mathematical Sciences\\
       University of Gothenburg and Chalmers University of Technology\\
       SE-412 96 Gothenburg, Sweden\\
       \And
       Rebecka J\"ornsten\\
       \texttt{jornsten@chalmers.se} \\
       Mathematical Sciences\\
       University of Gothenburg and Chalmers University of Technology\\
       SE-412 96 Gothenburg, Sweden\\
}

\maketitle

\begin{abstract}%
High-dimensional data sets are often analyzed and explored via the construction of a latent low-dimensional space which enables convenient visualization and efficient predictive modeling or clustering. For complex data structures, linear dimensionality reduction techniques like PCA may not be sufficiently flexible to enable low-dimensional representation. Non-linear dimension reduction techniques, like kernel PCA and autoencoders, suffer from loss of interpretability since each latent variable is dependent of all input dimensions. To address this limitation, we here present path lasso penalized autoencoders. This structured regularization enhances interpretability by penalizing each \emph{path} through the encoder from an input to a latent variable, thus restricting how many input variables are represented in each latent dimension. Our algorithm uses a group lasso penalty and non-negative matrix factorization to construct a sparse, non-linear latent representation. We compare the path lasso regularized autoencoder to PCA, sparse PCA, autoencoders and sparse autoencoders on real and simulated data sets. We show that the algorithm exhibits much lower reconstruction errors than sparse PCA and parameter-wise lasso regularized autoencoders for low-dimensional representations. Moreover, path lasso representations provide a more accurate reconstruction match, i.e.\ preserved relative distance between objects in the original and reconstructed spaces. 
\end{abstract}

\textbf{Keywords:} Sparse Dimensionality Reduction, Non-linear Dimensionality Reduction, Regularized Neural Networks, Group Lasso, Autoencoders

\section{Introduction}
Dimensionality reduction is a key component in data compression, data visualization and feature extraction.
One of the most widely used techniques is principal component analysis (PCA), that uses the eigendecomposition of the sample covariance matrix to construct latent dimensions as linear combinations of the original dimensions. The interpretability of the latent representation is increased if each latent dimension consists only of a subset of the original dimensions, as in sparse PCA \citep{zou2006sparse}. Since the introduction of sparse PCA, several variants have been presented, such as non-negative sparse PCA \citep{zass2007nonnegative}, where the loadings are all non-negative; multilinear sparse PCA \citep{lai2014multilinear}, that operates on tensors instead of vectors; and robust PCA \citep{meng2012improve,croux2013robust}, that is less affected by outliers.

While more interpretable than standard PCA, sparse PCA and its variants are still linear and therefore cannot capture more complex relations in the data. Furthermore, they have limitations on how efficiently they make use of the latent dimensions, something that is especially important when the number of latent dimensions is small.
Several non-linear generalizations of PCA exist, the most important ones being kernel PCA \citep{scholkopf1998nonlinear} and autoencoders \citep{kramer1991nonlinear}. 
In kernel PCA, a kernel function is used to implicitly and non-linearly map the data to a space with higher dimensionality than $d_x$ (the original dimensionality) in which linear PCA is performed. Autoencoders use an hourglass shaped neural network with $d_x$ input and output nodes and $d_z$ (the latent dimensionality) nodes in the middle layer. The same data is used both as input and output, so the goal of the autoencoder is to reconstruct the original data from a lower dimensional representation, that is found in the middle layer.

Although there do exist several algorithms for sparse kernel PCA, see work by \citet{tipping2001sparse}, \citet{smola2002sparse}, \citet{wang2016sparse} and \citet{guo2019sparse}, as well as for sparse autoencoders (e.g.\ \citealp{ng2011sparse}), the terminology might be a little confusing, since the algorithms are sparse in a different sense than sparse PCA. Instead of being sparse in the sense  original to latent dimensions, they are sparse in the sense observations to latent dimensions, which means that each observation is active only in a subset of the latent dimensions (and vice versa, each latent dimension depends only on a subset of the data). To further illustrate this, we look at the linear case, with $d_x$ and $d_z$ as before and with $n$ being the number of observations. For $\bm{Z}\in \R^{n\times d_z}$, $\bm{X}\in \R^{n\times d_x}$ and $\bm{W} \in \R^{d_x\times d_z}$, the latent representation $\bm{Z}$ is obtained from the original representation $\bm{X}$ as 
\begin{equation}
\bm{Z}=\bm{X}\cdot \bm{W}.
\label{eq:ZXW}
\end{equation}
Then sparsity in the sense original to latent dimensions means that $\bm{W}$ is sparse, while sparsity in the sense observations to latent dimensions means that $\bm{Z}$ is sparse.
A third notion of sparsity is feature selection, where only a subset of the original dimension are included in the model. This corresponds to entire rows of $\bm{W}$ being zero, allowing the remaining rows to be dense. 

The sparse autoencoder generalizes Equation \eqref{eq:ZXW} to $\bm{Z}=\textrm{Enc}(\bm{X})$, where Enc is the non-linear encoder and, to produce a sparse $\bm{Z}$, it includes a constraint on how frequently a latent unit is allowed to be non-zero, where frequency is measured among observations. This means that the architecture of the encoder might still be dense, in contrast to sparsity corresponding to a sparse $\bm{W}$, which requires a sparse architecture.

Neural networks with sparse architectures are usually obtained using different versions of lasso, or $l_1$-, regularization \citep{tibshirani1996regression}, and there are numerous examples of these. \citet{scardapane2017group} used group lasso \citep{yuan2006model} to remove all the links to or from a node, while \citet{yoon2017combined} combined group lasso with exclusive lasso \citep{zhou2010exclusive} on filters in convolutional neural networks to, on one hand, totally eliminate some filters (using group lasso) and, on the other hand, make filters as different as possible (using exclusive lasso). Lasso regularized autoencoders include work by \citet{wu2019learning}, with a linear encoder and a lasso regularized decoder; \citet{dabin2020blind}, with a lasso regularized encoder and a linear decoder; and \citet{ainsworth2018oi}, which uses group lasso and variational autoencoders to split the original dimensions into pre-defined groups and then uses one decoder per group, with a shared latent space.

In this paper we propose path lasso, that uses group lasso regularization to eliminate all connections between two nodes in two non-adjacent layers of a fully connected feedforward neural network. We apply path lasso, in combination with exclusive lasso, to an autoencoder to introduce sparsity between original and latent variables, obtaining non-linear, sparse dimensionality reduction in the same sense as in sparse PCA. Path lasso forces each latent dimension to be a function only of a subset of the original dimensions, while exclusive lasso encourages these subsets to differ.
To the best of our knowledge this is the first non-linear dimensionality reduction algorithm that is sparse in this sense.

The rest of this paper is organized as follows: In Section \ref{sec:method} we introduce the path lasso penalty and the path lasso penalized autoencoder. In Section \ref{sec:experiments} we run experiments on real and simulated data sets, comparing path lasso to PCA, autoencoders, sparse autoencoders, sparse PCA and an autoencoder with parameter-wise $l_1$-regularization. We show that, for a given sparsity, path lasso results in a lower reconstruction error and is better at reconstruction match, i.e.\ retaining relative positioning of objects in the reconstructed space as in the original space. We conclude with a discussion in Section \ref{sec:conculsions}.

\section{Method}
\label{sec:method}
This section is structured in the following way: Sections \ref{sec:lassos} and \ref{sec:prox} present short reviews of different flavours of the lasso algorithm and of proximal gradient descent, while Sections \ref{sec:paths} to \ref{sec:path_ae} describe different aspects of path lasso. Section \ref{sec:paths} and \ref{sec:paths_to_links} describe how paths between nodes in two non-adjacent layers are defined and penalized, and how the path penalties are transformed to individual link penalties. Section \ref{sec:apply_pen} discusses when and how to apply the path penalty and Section \ref{sec:path_ae} describes how we adapt path lasso when using it in an autoencoder. In Appendix \ref{sec:speed} we discuss different methods to accelerate training.

\subsection{Review of Lasso Penalties}
\label{sec:lassos}
The lasso algorithm sets some model parameters exactly equal to zero, thus eliminating them from the model. There are different versions of the lasso, four of which are used in this paper. Here follows a short summary to these.

\emph{(Standard) Lasso} applies an $l_1$-penalty to all the parameters in the parameter vector $\bm{\theta}\in \R^d$, and is defined as
$$\lambda\|\bm{\theta}\|_1 = \lambda\sum_{i=1}^d |\theta_i|,$$
where $\lambda>0$ is the regularization strength. Due to the non-differentiability of the absolute value at zero, some of the $\theta_i$'s are set to exactly zero and thus eliminated from the model.

\emph{Adaptive Lasso} applies an individual $l_1$-penalty to all the parameters in the parameter vector $\bm{\theta}\in \R^d$, and is defined as
$$\sum_{i=1}^d\lambda_i|\theta_i|,$$
where $\lambda_i := \frac \lambda {|\hat{\theta}^{\textrm{R}}_i|^\gamma}$ for some $\gamma >0$ (common practice is $\gamma=2$), and $\hat{\theta}^\textrm{R}_i$ is the ridge regression estimate of $\theta_i$. The idea is that important parameters will have larger values of $\hat{\theta}^{\textrm{R}}_i$ and thus be penalized less than unimportant parameters.

\emph{Group Lasso} penalizes pre-defined groups of parameters together, which means that either all, or none, of the parameters in the group are set to zero. The group lasso penalty for a group $g \in \mathcal{G}$ is defined as 
$$\lambda\|\bm{\theta_{g}}\|_2=\lambda\sqrt{\sum_{i\in g}\theta_i^2},$$
where $\mathcal{G}=\{g_1,\dots g_G\}$ is a disjoint partition of the index set $\{1,\dots,d\}$, i.e.\ each $g$ is a set of indices defining a group, and, for a given $\bm{\theta} \in \R^d$, $\bm{\theta_{g}}$ is a $d$-dimensional vector with components equal to $\bm{\theta}$ for indices within $g$ and zero otherwise. The total group lasso penalty is then taken as the sum of the penalties over the different groups. As seen, for a given group $g$,
$$\sqrt{\sum_{i\in g}\theta_i^2} =0 \iff \theta_i=0\ \forall i \in g.$$

The \emph{Exclusive Lasso} can be seen as the opposite of the group lasso. Again, the parameters are split into pre-defined groups, but now the goal is instead to impose a similar number of non-zero parameters in every group. With $g$ as before, its exclusive lasso penalty defined as 
$$\lambda\|\bm{\theta_{g}}\|_1^2=\lambda\left(\sum_{i\in {g}}|\theta_i|\right)^2,$$
and, again, the total penalty is defined as the sum over the groups. Since the number of mixed terms in the squared sum grows with the number of elements in the sum, the total number of mixed terms over all the groups is minimized when the non-zero elements are evenly distributed among the groups.

\subsection{Review of Proximal Gradient Descent}
\label{sec:prox}
The reason that some parameters in lasso penalized models are set exactly to zero, is that the derivative of the absolute value is not unique at zero: $\left.\frac{\partial |\theta|}{\partial \theta}\right|_{\theta=0}\in [-1,1]$. Gradient descent methods are unable to use this non-uniqueness and as a consequence no parameters are set exactly to zero. Proximal gradient descent \citep{rockafellar1976monotone} on the other hand takes the non-uniqueness into account, resulting in exact zeros.

If the objective function, $f(\bm{\theta})$, can be decomposed into $f(\bm{\theta}) = g(\bm{\theta}) + h(\bm{\theta})$, where $g$ is differentiable (typically a reconstruction error) and $h$ is not (typically a lasso regularization term), then a standard gradient descent step, followed by a proximal gradient descent step, is defined as
\begin{equation*}
\bm{\theta}^{t+1} = \textrm{prox}_{\alpha h}(\bm{\theta}^t-\alpha\nabla g(\bm{\theta}^t)),
\label{eq:prox_grad_desc}
\end{equation*}
where $\alpha>0$ is the learning rate and prox is the proximal operator that depends on $h$. 

For lasso, with $h(\bm{\theta}) = \lambda||\bm{\theta}||_1 = \sum_i \lambda |\theta_i|$, the proximal operator decomposes component-wise and is, with 
$(x)^+:=\textrm{max}(x, 0)$,
\begin{equation}
\label{eq:lasso_prox}
\textrm{prox}_{\alpha h}(\theta_i) = \textrm{sign}(\theta_i)\cdot (|\theta_i| - \alpha \lambda)^+.
\end{equation}

I.e., each $\theta_i$ is additively shrunk towards zero, and once it changes sign it becomes exactly zero.

For group lasso, with $h(\bm{\theta}) = \lambda\sum_{g \in \mathcal{G}}||\bm{\theta_g}||_2 = \lambda\sum_{g \in \mathcal{G}}\sqrt{\sum_{i \in g}\theta_i^2}$, where $\bm{\theta_g}$ are the $\theta_i$'s that belong to group $g$ and $\mathcal{G}$ is the set of all groups, the proximal operator for $\theta_i$ is
\begin{equation}
\label{eq:gl_prox}
\textrm{prox}_{\alpha h}(\theta_i) = \theta_i\cdot \left(1-\frac{\alpha \lambda}{\sqrt{\sum_{j\in g_i} \theta_j^2}}\right)^+,
\end{equation}
where $g_i$ is the group that $\theta_i$ belongs to. Thus all members of the group are penalized equally and set to zero at the exact same time.

\subsection{Path Penalties}
\label{sec:paths}
Let $\{\bm{o_l}\}_{l=0}^L,\ \bm{o_l}\in \R^{d_l},$ denote the outputs of $L+1$ consecutive layers in a fully connected feedforward neural network, where the first and last layers are also denoted by $\bm{x}$ and $\bm{y}$ respectively, i.e.\ $\bm{x}:=\bm{o_0}$ and $\bm{y}:=\bm{o_L}$. Then $\bm{y}$ depends on $\bm{x}$ as
\begin{equation}
\label{eq:nn}
\bm{y} = \Phi_L(\bm{W_{L}}\Phi_{L-1}(\bm{W_{L-1}}\Phi_{L-2}(\dots\Phi_1(\bm{W_1}\bm{x}+\bm{b_1})\dots) +\bm{b_{L-1}})+\bm{b_{L}}),
\end{equation}
where $\{\bm{W_l}\}_{l=1}^L,\ \bm{W_l}\in \R^{d_l\times d_{l-1}},$ are the weight matrices, $\{\bm{b_l}\}_{l=1}^L,\ \bm{b_l}\in \R^{d_l},$ the bias vectors, and $\{\Phi_l\}_{l=1}^L$ the (not necessarily identical) element-wise activation functions.
Thinking of Equation \eqref{eq:nn} as a graph, each weight matrix element, $w^l_{i_1i_2}:=(\bm{W_l})_{i_1i_2}$, corresponds to a link between two nodes in two adjacent layers

 By combining links from multiple weight matrices, paths between nodes in non-adjacent layers can be constructed. Between a given node in layer $\bm{x}$, $x_{i_0}$, and a node in layer $\bm{y}$, $y_{i_L}$, there are in total $\prod_{l=1}^{L-1}d_l$ paths, each consisting of $L$ links, see Figure \ref{fig:two_paths} for an illustration. A path is broken if at least one of its links has value zero and to disconnect the two nodes, all paths between them need to be broken. Just as \citet{neyshabur2015norm}, we define the value of a path as the product of its absolute valued links; see Definition \ref{def:path}.
\begin{definition}[Path Value]
\label{def:path}
For $k=1,2,\dots,\prod_{l=1}^{L-1}d_k$, where each $k$ corresponds to a unique combination of the indices $i_1$ to $i_{L-1}$
$$p^k_{i_Li_0}:= |w^L_{i_Li^k_{L-1}}|\cdot |w^{L-1}_{i^k_{L-1}i^k_{L-2}}|\cdot \dots |w^1_{i^k_1i_0}|.$$
\end{definition}
With this definition a broken path, where at least one link is zero, has the value zero. We further define a group of paths so that all paths connecting $x_{i_0}$ to $y_{i_L}$ form one group; then if all paths in the group are zero, the nodes are disconnected. 
Applying the group lasso proximal operator to path $p^k_{i_Li_0}$, belonging to group $g_{i_Li_0}$ then amounts to
\begin{equation}
\label{eq:prox_path}
p^k_{i_Li_0}\cdot\left(1-\frac{\alpha\lambda}{\sqrt{\sum_{l\in g_{i_Li_0}}(p^l_{i_Li_0})^2}}\right)^+, 
\end{equation}
which according to Proposition \ref{thm:GL_eq_paths} can be written as
$$p^k_{i_Li_0}\cdot\left(1-\frac{\alpha\lambda}{(\Wgl)_{i_Li_0}}\right)^+$$
with $\Wgl$ according to Definition \ref{def:wgl}.

\begin{definition}[Path Lasso Matrix]
\label{def:wgl}
With the square and the square root taken element-wise,
\begin{equation}
\label{eq:group_lasso}
\Wgl:=\sqrt{\prod_{l=L,\dots,1} (\bm{W_{l}})^2}=\sqrt{(\bm{W_{L}})^2\cdot (\bm{W_{L-1}})^2\cdot \dots (\bm{W_{1}})^2}.
\end{equation}
\end{definition}

\begin{proposition}\label{thm:GL_eq_paths}
The element $(i_L, i_0)$ in $\Wgl$ is the group lasso penalty on the group consisting of all paths between nodes $x_{i_0}$ and $y_{i_L}$, i.e.
\begin{equation*}
(\Wgl)_{i_Li_0}=\sqrt{\sum_{k\in g_{i_Li_0}} (p^k_{i_Li_0})^2}
\end{equation*}
\end{proposition}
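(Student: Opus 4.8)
The plan is to prove the identity by unwinding the matrix product in Definition \ref{def:wgl} one factor at a time and matching the resulting sum over intermediate node indices with the sum over paths in the group $g_{i_Li_0}$. Throughout, one must keep in mind that in \eqref{eq:group_lasso} the notation $(\bm{W_l})^2$ denotes the \emph{element-wise} square, so that $\left((\bm{W_l})^2\right)_{i_li_{l-1}} = (w^l_{i_li_{l-1}})^2$, whereas the products between the factors $(\bm{W_L})^2, (\bm{W_{L-1}})^2, \dots, (\bm{W_1})^2$ are ordinary matrix products; the single square root at the very end is again element-wise.

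First I would introduce the partial products $\bm{M_m} := \prod_{l=m,\dots,1}(\bm{W_l})^2 \in \R^{d_m\times d_0}$ for $m=1,\dots,L$, so that $\bm{M_L}^{1/2} = \Wgl$ with the square root element-wise, and prove by induction on $m$ that
\begin{equation*}
(\bm{M_m})_{i_mi_0} = \sum_{i_1,\dots,i_{m-1}} \prod_{l=1}^m (w^l_{i_li_{l-1}})^2 .
\end{equation*}
The base case $m=1$ is immediate since $(\bm{M_1})_{i_1i_0} = (w^1_{i_1i_0})^2$. For the inductive step, I would use $\bm{M_m} = (\bm{W_m})^2\cdot \bm{M_{m-1}}$, so that by the definition of matrix multiplication $(\bm{M_m})_{i_mi_0} = \sum_{i_{m-1}} (w^m_{i_mi_{m-1}})^2\,(\bm{M_{m-1}})_{i_{m-1}i_0}$, and then substitute the induction hypothesis and merge the sums.

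Next I would specialize to $m=L$. By the indexing convention fixed just before Definition \ref{def:path}, each index $k \in g_{i_Li_0}$ corresponds bijectively to a tuple $(i^k_1,\dots,i^k_{L-1})$, and the product $\prod_{l=1}^L (w^l_{i_li_{l-1}})^2$ over that tuple is precisely $(p^k_{i_Li_0})^2$, because squaring the product of absolute values in Definition \ref{def:path} gives the product of the squared links. Hence $(\bm{M_L})_{i_Li_0} = \sum_{k\in g_{i_Li_0}} (p^k_{i_Li_0})^2$, and applying the element-wise square root yields $(\Wgl)_{i_Li_0} = \sqrt{\sum_{k\in g_{i_Li_0}} (p^k_{i_Li_0})^2}$, as claimed.

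I do not expect a genuine obstacle here: the argument is essentially bookkeeping. The one point that needs care — and the place where the argument could silently go wrong — is keeping the element-wise operations and the matrix products straight, in particular checking that the left-to-right order $\bm{W_L},\bm{W_{L-1}},\dots,\bm{W_1}$ of the factors is the one consistent with traversing a path from the node $x_{i_0}$ in layer $\bm{x}$ to the node $y_{i_L}$ in layer $\bm{y}$, and that associativity of matrix multiplication legitimately lets us expand the whole product as a single sum over all intermediate indices $i_1,\dots,i_{L-1}$ rather than an iterated one. Once the index conventions are pinned down, the induction closes with no further work.
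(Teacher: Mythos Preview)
Your proposal is correct and follows essentially the same approach as the paper: both unwind the iterated matrix product $(\bm{W_L})^2\cdots(\bm{W_1})^2$ into a sum over all intermediate indices $i_1,\dots,i_{L-1}$ and identify that sum with $\sum_{k\in g_{i_Li_0}} (p^k_{i_Li_0})^2$, then take the element-wise square root. The only cosmetic difference is that the paper writes the full expansion in a single step by appealing directly to the definition of matrix multiplication, whereas you package the same computation as an induction on the number of factors.
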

For a proof, see Appendix \ref{sec:proofs}.

\begin{figure}
  \includegraphics[width=1.\textwidth]{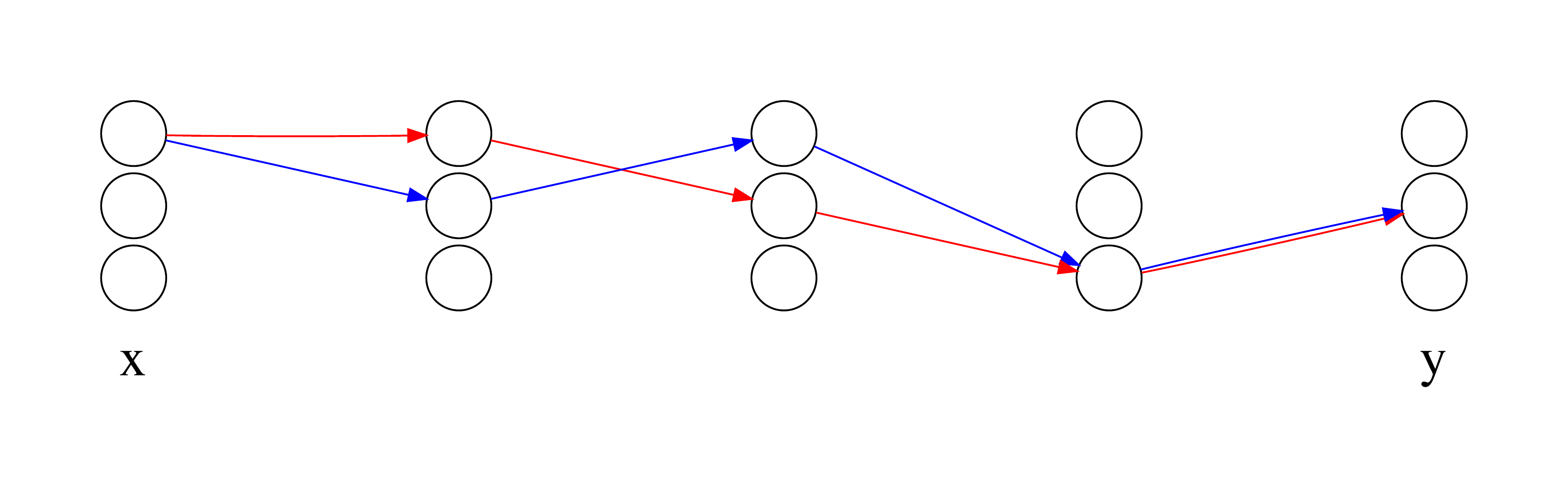}
  \caption{Illustration of two of the $3^3$ (where the exponent is the number of inner layers and the base is their width) possible paths between node one in layer one and node two in layer five. Each arrow denotes a link, which corresponds to an element in a weight matrix.}
  \label{fig:two_paths}
\end{figure}

We call each element in $\Wgl$ a connection, i.e.\ $(\Wgl)_{i_Li_0}$ is the strength of the connection between nodes $x_{i_0}$ and $y_{i_L}$ and if it is zero, meaning that all paths between the nodes are broken, then the nodes are disconnected. Proposition \ref{thm:GL_der} gives a theoretical justification for this definition of connections, stating that if there is no connection between $x_{i_0}$ and $y_{i_L}$, then the derivative of $y_{i_L}$ with respect to $x_{i_0}$ is zero, regardless of the value of $\bm{x}$.
\begin{proposition}\label{thm:GL_der}
Let the vector $\bm{y}$ depend on the vector $\bm{x}$ as stated in Equation \eqref{eq:nn} and let $\Wgl$ be the path lasso matrix defined in Equation \eqref{eq:group_lasso}. Then, if all weights and activations are bounded,
$$(\Wgl)_{i_Li_0}=0 \implies \frac{\partial y_{i_L}}{\partial x_{i_0}}=\left(\frac{\partial \bm{y}}{\partial \bm{x}}\right)_{i_Li_0} = 0 \qquad \forall \bm{x} \in \mathbb{R}^{d_0}.$$
\end{proposition}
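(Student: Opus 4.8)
The plan is to compute the Jacobian $\partial\bm{y}/\partial\bm{x}$ explicitly by the chain rule and match its entries term-by-term against the path values of Definition \ref{def:path}. Writing the pre-activation of layer $l$ as $\bm{a_l}:=\bm{W_l}\bm{o_{l-1}}+\bm{b_l}$ and letting $\bm{D_l}:=\mathrm{diag}\!\left(\Phi_l'(\bm{a_l})\right)$ be the diagonal matrix of activation derivatives, differentiating Equation \eqref{eq:nn} gives
\begin{equation*}
\frac{\partial\bm{y}}{\partial\bm{x}}=\bm{D_L}\bm{W_L}\bm{D_{L-1}}\bm{W_{L-1}}\cdots\bm{D_1}\bm{W_1}.
\end{equation*}
Since the weights are fixed reals and the activations are bounded, every $\bm{a_l}$ is finite, so (assuming the $\Phi_l$ are differentiable) every diagonal entry of every $\bm{D_l}$ is finite; this finiteness is the only role the boundedness hypothesis plays.

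Next I would expand the $(i_L,i_0)$ entry of this matrix product over the intermediate indices, using that each $\bm{D_l}$ is diagonal:
\begin{equation*}
\left(\frac{\partial\bm{y}}{\partial\bm{x}}\right)_{i_Li_0}=\sum_{i_1,\dots,i_{L-1}}(\bm{D_L})_{i_Li_L}\,w^L_{i_Li_{L-1}}\,(\bm{D_{L-1}})_{i_{L-1}i_{L-1}}\,w^{L-1}_{i_{L-1}i_{L-2}}\cdots(\bm{D_1})_{i_1i_1}\,w^1_{i_1i_0}.
\end{equation*}
Each tuple $(i_1,\dots,i_{L-1})$ is exactly one of the path labels $k$ indexing the group $g_{i_Li_0}$, and the corresponding summand equals the product of the links $w^L_{i_Li_{L-1}},\dots,w^1_{i_1i_0}$ (whose absolute values define $p^k_{i_Li_0}$) times the finite scalars $(\bm{D_l})_{i_li_l}$. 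I would state this bijection between intermediate-index tuples and path labels explicitly to make the matching rigorous.

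To conclude: by Proposition \ref{thm:GL_eq_paths}, $(\Wgl)_{i_Li_0}=0$ means $\sum_{k\in g_{i_Li_0}}(p^k_{i_Li_0})^2=0$, hence $p^k_{i_Li_0}=0$ for every path $k$ in the group. Therefore in each summand above at least one link factor vanishes; since all remaining factors are finite, each summand is $0$, and so the whole sum is $0$. Nothing in this argument used the value of $\bm{x}$ beyond the finiteness of the pre-activations, which holds for all $\bm{x}$, so $(\partial\bm{y}/\partial\bm{x})_{i_Li_0}=0$ for every $\bm{x}\in\mathbb{R}^{d_0}$.

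The one delicate point, and the step I expect to require the most care, is the use of the boundedness hypothesis: the claim "a summand with a zero link factor is zero" is only legitimate because the activation-derivative factors are finite, ruling out an indeterminate $0\cdot\infty$. I would make sure to invoke boundedness of the weights and activations (and differentiability of the $\Phi_l$) precisely at the point where each vanishing-link summand is declared to be $0$.
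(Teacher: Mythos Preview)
Your proposal is correct and follows essentially the same approach as the paper: compute the Jacobian via the chain rule as an alternating product of diagonal activation-derivative matrices and weight matrices, expand the $(i_L,i_0)$ entry as a sum over intermediate indices, and observe that $(\Wgl)_{i_Li_0}=0$ forces every summand to contain a zero link factor (with boundedness ruling out $0\cdot\infty$). The only organisational difference is that the paper packages the ``sum of nonnegative terms is zero $\Rightarrow$ each term is zero $\Rightarrow$ the signed/scaled sum is zero'' step into a standalone lemma about matrices $\{\bm{A_k}\}$, $\{\bm{D_k}\}$ and an element-wise $f$ with $f(x)=0\iff x=0$, whereas you invoke Proposition~\ref{thm:GL_eq_paths} and argue directly on the path values; the logical content is the same.
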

For a proof, see Appendix \ref{sec:proofs}.

\subsection{From Paths to Links}
\label{sec:paths_to_links}
Equation \eqref{eq:prox_path} describes how all paths between two nodes in the network are penalized towards zero, but it does not tell how this penalization affects the individual links. However, since the neural network is expressed in terms of the individual links, rather than in paths, the penalized paths must be translated into penalized links, i.e.\ the penalty must be expressed on link level, rather than on path level. Proposition \ref{thm:paths_to_links} describes how this translation can be done, i.e.\ how applying the proximal group lasso operator to the paths can be translated into applying a standard lasso proximal operator to each individual link.
\begin{proposition}\label{thm:paths_to_links}
The group lasso proximal step in path space can be transformed to standard lasso proximal steps in link space by first solving the matrix equation
\begin{equation}
\prod_{l=L,\dots,1}|(\bm{W_l})^t|\odot\left(1-\frac{\alpha\lambda}{(\Wgl)^t}\right)^+ = \prod_{l=L,\dots,1}\bm{\tilde{W}_l}
\label{eq:eq_syst_mat}
\end{equation}
for $\{\bm{\tilde{W}_l}\}_{l=1}^L$, and then set
$(\bm{W_l})^{t+1}:=\emph{sign}((\bm{W_l})^t)\odot \bm{\tilde{W}_l}$, where $\tilde{w}^l_{ij} =:(|(w^l_{ij})^t|-\alpha\lambda^{l,t}_{ij})^+$ for some $\lambda^{l,t}_{ij}>0$.\\
The absolute value, sign and division are taken element-wise and $\odot$ denotes element-wise multiplication.
\end{proposition}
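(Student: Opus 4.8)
The plan is to prove Proposition~\ref{thm:paths_to_links} in three stages: rewrite both sides of the matrix equation \eqref{eq:eq_syst_mat} entrywise in terms of path values; identify the resulting per-connection identity with the aggregate of the group-lasso proximal steps \eqref{eq:prox_path}; and then argue that a solution $\{\bm{\tilde{W}_l}\}$ can be found inside the box $0\le\tilde{w}^l_{ij}\le|(w^l_{ij})^t|$, which is exactly what is needed to write $\tilde{w}^l_{ij}=(|(w^l_{ij})^t|-\alpha\lambda^{l,t}_{ij})^+$ with $\lambda^{l,t}_{ij}\ge 0$. For the first stage, note that the product in \eqref{eq:eq_syst_mat} is the ordinary matrix product of the element-wise absolute-value matrices, so by Definition~\ref{def:path} and the definition of matrix multiplication its $(i_L,i_0)$ entry is $\sum_{i_1,\dots,i_{L-1}}|w^L_{i_Li_{L-1}}|\cdots|w^1_{i_1i_0}|=\sum_{k\in g_{i_Li_0}}p^k_{i_Li_0}$, the total path value of the connection $(i_L,i_0)$; similarly the $(i_L,i_0)$ entry of $\prod_{l=L,\dots,1}\bm{\tilde{W}_l}$ is $\sum_{k\in g_{i_Li_0}}\tilde{p}^k_{i_Li_0}$, where $\tilde{p}^k_{i_Li_0}:=\prod_{l=1}^{L}\tilde{w}^l_{i^k_li^k_{l-1}}$ is the path value induced by the matrices $\bm{\tilde{W}_l}$. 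Since $\odot\left(1-\frac{\alpha\lambda}{(\Wgl)^t}\right)^+$ multiplies the $(i_L,i_0)$ entry of the left-hand side by $\left(1-\alpha\lambda/((\Wgl)^t)_{i_Li_0}\right)^+$, the matrix equation \eqref{eq:eq_syst_mat} is equivalent to the family of scalar identities, one per connection, $\sum_{k\in g_{i_Li_0}}\tilde{p}^k_{i_Li_0}=\left(1-\alpha\lambda/((\Wgl)^t)_{i_Li_0}\right)^+\sum_{k\in g_{i_Li_0}}p^k_{i_Li_0}$.

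For the second stage, apply Proposition~\ref{thm:GL_eq_paths} to replace $((\Wgl)^t)_{i_Li_0}$ by $\sqrt{\sum_{k\in g_{i_Li_0}}(p^k_{i_Li_0})^2}$; the scalar factor in the identity just obtained then becomes exactly the group-lasso multiplier of \eqref{eq:prox_path}, so its right-hand side equals $\sum_{k\in g_{i_Li_0}}\left[p^k_{i_Li_0}\left(1-\alpha\lambda/\sqrt{\sum_{m\in g_{i_Li_0}}(p^m_{i_Li_0})^2}\right)^+\right]$, i.e.\ the sum over the group of the group-lasso proximal updates of the individual paths. Hence a solution of \eqref{eq:eq_syst_mat} is precisely a choice of link magnitudes whose induced path values reproduce, connection by connection and in aggregate, the path-space proximal step. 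Given such a solution with $0\le\tilde{w}^l_{ij}\le|(w^l_{ij})^t|$, set $\alpha\lambda^{l,t}_{ij}:=|(w^l_{ij})^t|-\tilde{w}^l_{ij}\ge 0$; then $(|(w^l_{ij})^t|-\alpha\lambda^{l,t}_{ij})^+=\tilde{w}^l_{ij}$, so $(\bm{W_l})^{t+1}=\textrm{sign}((\bm{W_l})^t)\odot\bm{\tilde{W}_l}$ is exactly the coordinatewise standard-lasso proximal step \eqref{eq:lasso_prox} applied link by link with the link-specific strength $\lambda^{l,t}_{ij}$, which is strictly positive on every link that is actually shrunk.

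The crux is the third stage: exhibiting a solution of \eqref{eq:eq_syst_mat} inside $\prod_l[0,|(\bm{W_l})^t|]$. The obstruction is that a single link lies on many paths belonging to different connection groups whose shrinkage factors $\left(1-\alpha\lambda/((\Wgl)^t)_{i_Li_0}\right)^+$ generally differ, so one cannot simply rescale each link by the $L$-th root of its connection's factor; moreover, since $\{\bm{\tilde{W}_l}\}\mapsto\prod_l\bm{\tilde{W}_l}$ is multilinear, solving the equation by linear algebra need not keep the factors inside the box, and for degenerate widths -- e.g.\ a hidden layer of width one, in which case $\prod_l|(\bm{W_l})^t|$ and every $\prod_l\bm{\tilde{W}_l}$ have rank at most one while the shrunk target generically does not -- no exact box solution exists, so the statement implicitly presumes the intermediate widths are large enough, as they are in the autoencoder of Section~\ref{sec:path_ae}. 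I would therefore proceed constructively: peel off the last factor, choosing $\bm{\tilde{W}_L}\in[0,|(\bm{W_L})^t|]$ and recursing on the shorter chain $\bm{W_{L-1}}\cdots\bm{W_1}$ against an updated non-negative target, thereby reducing the problem to repeated box-constrained two-factor non-negative factorizations -- equivalently, and as the algorithm does in practice, one may obtain $\{\bm{\tilde{W}_l}\}$ by a box-projected non-negative matrix factorization of the shrunk product matrix. I expect the bookkeeping in the first two stages to be routine once Proposition~\ref{thm:GL_eq_paths} is available, and the existence and constructibility of a box-respecting factorization -- hence of genuine non-negative link penalties $\lambda^{l,t}_{ij}$ -- to be where the real difficulty, and any hidden width hypotheses, reside.
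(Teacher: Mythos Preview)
Your first two stages are essentially the paper's argument, run in the opposite direction. The paper starts from the per-path identity
\[
p^k_{i_Li_0}\left(1-\frac{\alpha\lambda}{(\Wgl)_{i_Li_0}}\right)^+ \;=\; \tilde{w}^L_{i_Li^k_{L-1}}\cdots \tilde{w}^1_{i^k_1 i_0},
\]
posits it as an assumption (``Assuming the penalized path can be written as a product of penalized links''), takes absolute values, sums over all paths in the connection $(i_L,i_0)$, and recognises the resulting double sum as the $(i_L,i_0)$ entry of the matrix product to obtain \eqref{eq:eq_syst_mat}. You instead expand \eqref{eq:eq_syst_mat} entrywise and arrive at the same per-connection identity $\sum_k \tilde{p}^k_{i_Li_0} = \left(1-\alpha\lambda/(\Wgl)_{i_Li_0}\right)^+\sum_k p^k_{i_Li_0}$. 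The content is identical; only the order of presentation differs.

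Where you diverge from the paper is your third stage. The paper's proof does \emph{not} establish existence of a box-constrained factorisation $\{\bm{\tilde{W}_l}\}\subset\prod_l[0,|(\bm{W_l})^t|]$; it simply declares that solving \eqref{eq:eq_syst_mat} with the constraint $\tilde{w}^l_{ij}\le |(w^l_{ij})^t|$ yields the desired $\lambda^{l,t}_{ij}\ge 0$, and defers the question of solvability to the surrounding text (which observes that the system is underdetermined ``unless the hidden layers are very narrow'') and to the modified NMF routine in Appendix~\ref{sec:nmf}. Your concerns about rank obstructions at small intermediate widths are therefore legitimate and not addressed by the paper's proof; you are being more careful than the original on exactly the point where the argument is weakest. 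If your aim is to reproduce the paper's proof, stages one and two suffice; if your aim is a complete proof of the proposition as stated, your stage three identifies a genuine gap that the paper leaves open.
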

The proof, which is presented in Appendix \ref{sec:proofs}, contains a step where the paths in each group are summed over. This step transforms the system of non-linear equations from one equation per path to one equation per connection, i.e.\ from $\prod_{l=0}^{L}d_l$ to $d_0\cdot d_L$ equations, and to a form can be solved efficiently using non-negative matrix factorization, NMF, with the extra requirement that $\tilde{w}^l_{i_li_{l-1}}\leq|(w^l_{i_li_{l-1}})^t|$, or equivalently $\lambda^{l,t}_{i_li_{l-1}}\geq 0$. We describe this algorithm in Appendix \ref{sec:nmf}. 

The reduced number of equations leads to an undetermined system (unless the hidden layers are very narrow). Therefore, all equations can hold, although there might be more than one solution. Since we are interested in a solution that lies close to the unpenalized weight matrices, we use $|\bm{W_l}|$ as the seed for each $\bm{\tilde{W}_l}$ in the matrix factorization.

An optimization step using proximal gradient descent on paths is summarized in Algorithm \ref{alg:prox_path_step}. The bottleneck of this algorithm is the matrix factorization step from Equation \eqref{eq:eq_syst_mat}. In Appendix \ref{sec:speed}, different methods to alleviate this bottleneck are discussed.

\begin{algorithm}[ht]
\caption{Proximal Path Lasso Optimization Step}
\textbf{Input:} Parameters at time $t$, $\{(\bm{W_l})^t, (\bm{b_l})^t\}_{l=l}^L$; data, $\bm{x}$; learning rate, $\alpha$; regularization strength, $\lambda$.\\
\textbf{Output:} Parameters at time $t+1$: $\{(\bm{W_l})^{t+1}, (\bm{b_l})^{t+1}\}_{l=l}^L$.\\
\begin{algorithmic}[1]
  \State Update all weights and biases using one step of standard (stochastic) gradient descent:
  $$\{(\bm{W_l})^{t+\frac12},(\bm{b_l})^{t+\frac12}\}_{l=1}^L\leftarrow \{(\bm{W_l})^{t},(\bm{b_l})^{t}\}_{l=1}^L-\alpha\cdot\nabla\left(\textrm{NN}(\{(\bm{W_l})^{t},(\bm{b_l})^{t}\}_{l=1}^L;\bm{x})\right),$$
where NN denotes the neural network.
  \State Penalize paths, by applying the group lasso proximal operator, and update the path values accordingly:
  \begin{algsubstates}
  \State Construct the path lasso penalty matrix for the weight outputs from step 1, according to Equation \eqref{eq:group_lasso}:
  $$(\Wgl)^{t+\frac12}\leftarrow \sqrt{\prod_{l=L,\dots,1}\left((\bm{W_l})^{t+\frac{1}{2}}\right)^2}.$$
  \State Penalize paths according to Equation \eqref{eq:eq_syst_mat}:
  $$\bm{P}^{t+1}\leftarrow\prod_{l=L,\dots,1}|(\bm{W_l})^{t+\frac{1}{2}}|\odot\left(1-\frac{\alpha\lambda}{(\Wgl)^{t+\frac12}}\right)^+.$$
  \end{algsubstates}
  \State Translate penalized paths to penalized links:
  \begin{algsubstates}
  \State Translate penalized paths to absolute valued penalized links, using modified non-negative matrix factorization:
  $$\left\{(\bm{\tilde{W}_l})^{t+1}\right\}_{l=1}^L\leftarrow \textrm{NMF}\left(\bm{P}^{t+1}\right).$$
  \State Restore signs:
  $$(\bm{W_l})^{t+1}\leftarrow\textrm{sign}((\bm{W_l})^t)\odot (\bm{\tilde{W}_l})^{t+1}.$$
  \end{algsubstates}
\end{algorithmic}
\label{alg:prox_path_step}
\end{algorithm}

\subsection{Applying the Path Lasso Penalty}
\label{sec:apply_pen}
Since training a neural network is a non-convex optimization problem, with multiple local optima, how and when the regularization is added might affect which optimum is found. If a too high penalty is added too early during training, the risk of getting stuck in a bad local optimum is larger than if the regularization is added later. To mitigate this, training was split into three stages:
\begin{itemize}
\item
Following the adaptive lasso approach of \citet{allerbo2020flexible}, we first trained the network without  path regularization to obtain individual penalties for each connection during the path lasso stage. 
\item
In a second stage, we added path regularization with an individual penalty to each connection, depending on the magnitude of the connection after the first stage:
\begin{equation*}
\label{eq:adap_lbda}
\lambda_{i_L,i_0} := \frac{\lambda}{((\Wglhat)_{i_Li_0})^\gamma},
\end{equation*}
where $\Wglhat$ is the value of $\Wgl$ after the first optimization stage and $\gamma >0$. Throughout this paper, $\gamma=2$ was used.
\item
To reduce bias and thus improve performance, we finally added a stage of unregularized training after the path lasso stage, with the links set to zero in the previous stage kept to zero. 
\end{itemize}
All stages were trained until convergence and stages two and three were warm started with the solutions from the previous stage.

\subsection{Path Lasso for Dimensionality Reduction}
\label{sec:path_ae}
Path lasso as described so far is applicable to any two non-adjacent layers in any feedforward neural network. To use it for sparse non-linear dimensionality reduction, it was applied to an autoencoder, with the following adaptations:

\begin{itemize}
\item
Path penalties were applied between the input ($\bm{x}\in \R^{d_x}$) and latent ($\bm{z}\in \R^{d_z}$) variables, and between the latent and output ($\bm{\hat{x}}\in\R^{d_x}$) variables. 
\item
To enforce the encoder and the decoder to be symmetric, the group lasso groups were defined as all paths connecting $x_i$ and $z_j$, together with all paths connecting $z_j$ and $\hat{x}_i$, i.e.
$$\Wgl:=\sqrt{\prod_{l=L,\dots,1} (\bm{W^E_{l}})^2+\left(\prod_{l=L,\dots,1} (\bm{W^D_{l}})^2\right)^\top},$$
where $\{\bm{W^E_l}\}_{l=1}^L$ and $\{\bm{W^D_l}\}_{l=1}^L$ are the weight matrices of the encoder and the decoder, respectively. This means that if an input is disconnected to a latent variable, so is the corresponding output, by construction.
\item
To encourage the algorithm to make equal use of the latent dimensions, we added an exclusive lasso penalty to the elements in $\Wgl$, with as many groups as there are latent dimensions, each group being defined as the connections to a given latent dimension. 
\end{itemize}

\section{Experiments}
\label{sec:experiments}
In order to evaluate path lasso for dimensionality reduction we applied it to three different data sets, one with synthetic data, consisting of Gaussian clusters on a hypercube, one with text documents from newsgroup posts, and one with images of faces. In each experiment, 20 \% of the data was set aside for testing and the remaining 80 \% was split 90-10 into training and validation data; all visualizations were made using the testing data.
All autoencoders used one hidden layer with tanh activations in the encoder and decoder respectively, and were trained with $l_2$-loss. For optimization stages not using proximal gradient descent, the Adam optimizer \citep{kingma2014adam} was used.

In addition to path lasso, we used a standard autoencoder, a sparse autoencoder, an autoencoder with parameter-wise $l_1$-regularization and thresholding (hereafter referred to as standard lasso), PCA and sparse PCA. The reason for adding thresholding in standard lasso is because unless proximal methods are used, no parameters are set exactly to zero, but instead to very small values, as discussed in Section \ref{sec:prox}. It should also be noted that the sparse autoencoder is sparse in terms of observations to latent dimensions, and not in terms of original to latent dimensions, as we are interested in; see the text associated to Equation \eqref{eq:ZXW} for details. The standard autoencoder and PCA are of course not sparse in any sense. Since "observation sparse" algorithms are not as relevant to us as the "truly sparse" algorithms, we omit them in some of the comparisons.

The following measures were calculated on the testing data: 
\begin{itemize}
\item
Reconstruction error as explained variance ($R^2$).
\item
Fraction of correctly identified reconstructions. A reconstructed observation is considered correctly identified if it is closer to its own original observation than any of the other ones, measured in $l_2$-distance, i.e.\ $\|\bm{\hat{x}_i}-\bm{x_i}\|_2<\|\bm{\hat{x}_i}-\bm{x_j}\|_2$, $i\neq j$. This is hereafter referred to as observation reconstruction match.
\item
Fraction of correctly reconstructed labels. The reconstructed label is defined as the label of the original observation that is closest to the reconstructed observation, where distance is measured as above, i.e.\ the label of $\bm{x_j}$, where $\|\bm{\hat{x}_i}-\bm{x_j}\|_2<\|\bm{\hat{x}_i}-\bm{x_k}\|_2$, $j\neq k$. This is hereafter referred to as label reconstruction match.
\end{itemize}

\subsection{Synthetic Data Set}
\label{sec:synth}
Sixteen clusters were generated in $\mathbb{R}^4$, centered at each of the sixteen vertices in the hypercube $\{0,1\}^4$. For cluster $i$, 100 data points were sampled according to $\bm{x_i} \sim \mathcal{N}(\bm{\mu_i},0.01\cdot \bm{I_4})$, where $\bm{I_4}$ is the identity matrix and $\bm{\mu_i}$ is one of the sixteen vertices in $\{0,1\}^4$. The four dimensional data set was reduced down to two dimensions using the six different algorithms. For the four autoencoder based algorithms, the number of nodes in the five layers of the autoencoder were 4, 50, 2, 50 and 4, respectively. The three sparse algorithms (in the sense original to latent dimensions) were penalized so that four of the original eight connections remained. The experiment was performed twice, with and without added noise, distributed according to $\mathcal{N}(0,0.3^2)$.

Ten different splits of the data into training and validation sets were done. For each split, three different optimization seeds were used and the seed resulting in the best $R^2$ value on the validation data was chosen. The resulting mean and standard deviations are presented in Table \ref{tab:gauss}. The p-values come from the one-sided paired rank test, testing whether path lasso performs better than the competing algorithm. P-values smaller than 1 \% are marked in bold. With noise added, path lasso performs significantly better than the other algorithms both in terms of $R^2$ and reconstruction match. Without noise path lasso still performs better than the dense algorithms in terms of observation reconstruction match, while all the four non-linear methods perform very well in terms of $R^2$ and label reconstruction.

The results with the best $R^2$ values are plotted in Figure \ref{fig:gauss}, where clusters that are diagonal to each other in $\R^4$ (e.g.\ $(0,1,0,0)$ and $(1,0,1,1)$) are plotted using the same color, but with different markers - circles or crosses. For the three sparse algorithms, path lasso, standard lasso and sparse PCA, each of the two latent dimensions becomes a combination of two of the original four dimensions, which can be seen in the axis aligned data in the plots. Even with no added noise the linear algorithms, PCA and sparse PCA, are not able to fully separate the sixteen clusters, while all four non-linear algorithms, based an autoencoders, are.

\begin{sidewaystable}
\centering
\begin{tabular}{|l|l|l|l|l|l|l|l|l|}
 \hline
\multirow{3}{*}{Algorithm} & \multirow{3}{*}{Noise} & \multirow{3}{*}{Connections}
& \multicolumn{2}{c|}{\multirow{2}{*}{$R^2$}} & \multicolumn{4}{c|}{Reconstruction Match}\\
\cline{6-9}
& & & \multicolumn{2}{c|}{ } & \multicolumn{2}{c|}{Observation} & \multicolumn{2}{c|}{Label} \\
\cline{4-9}
& & & Mean (std) & p-value & Mean (std) & p-value & Mean (std) & p-value\\
\hline
\hline
Path Lasso & No & $4$ & $0.98\ (0.0013)$ & - & $0.37\ (0.015)$ & - & $1.0\ (0.0014)$ & -\\
Standard Lasso & No & $4$ & $0.98\ (0.0020)$ & $0.28$ & $0.36\ (0.022)$ & $0.070$ & $1.0\ (0.0014)$ & $0.68$\\
Sparse PCA & No & $4$ & $0.48\ (0.0018)$ & $\bm{0.00098}$ & $0.067\ (0.019)$ & $\bm{0.00098}$ & $0.30\ (0.014)$ & $\bm{0.0029}$\\
Autoencoder & No & $8$ & $0.98\ (0.0026)$ & $0.019$ & $0.35\ (0.013)$ & $\bm{0.0046}$ & $1.0\ (0.0021)$ & $0.16$\\
Sparse AE & No & $8$ & $0.98\ (0.00027)$ & $0.50$ & $0.34\ (0.0076)$ & $\bm{0.0039}$ & $1.0\ (0.0013)$ & $0.018$\\
PCA & No & $8$ & $0.48\ (0.0014)$ & $\bm{0.00098}$ & $0.067\ (0.015)$ & $\bm{0.00098}$ & $0.43\ (0.044)$ & $\bm{0.0029}$\\
\hline
Path Lasso & Yes & $4$ & $0.89\ (0.017)$ & - & $0.38\ (0.038)$ & - & $0.88\ (0.019)$ & -\\
Standard Lasso & Yes & $4$ & $0.58\ (0.097)$ & $\bm{0.00098}$ & $0.11\ (0.030)$ & $\bm{0.00098}$ & $0.39\ (0.11)$ & $\bm{0.0029}$\\
Sparse PCA & Yes & $4$ & $0.50\ (0.0016)$ & $\bm{0.00098}$ & $0.11\ (0.0061)$ & $\bm{0.0029}$ & $0.37\ (0.010)$ & $\bm{0.0029}$\\
Autoencoder & Yes & $8$ & $0.86\ (0.011)$ & $\bm{0.0029}$ & $0.33\ (0.014)$ & $\bm{0.0095}$ & $0.86\ (0.016)$ & $\bm{0.0088}$\\
Sparse AE & Yes & $8$ & $0.85\ (0.0038)$ & $\bm{0.0020}$ & $0.30\ (0.015)$ & $\bm{0.00098}$ & $0.85\ (0.0035)$ & $\bm{0.0020}$\\
PCA & Yes & $8$ & $0.50\ (0.0043)$ & $\bm{0.00098}$ & $0.12\ (0.0088)$ & $\bm{0.00098}$ & $0.40\ (0.030)$ & $\bm{0.00098}$\\
\hline
\end{tabular}
\caption{Number of remaining connections, explained variance, reconstruction match and p-value for six the algorithms when reducing 16 clusters from four to two dimensions with and without added noise. P-values are for the one-sided paired rank test, that tests whether path lasso performs better than the competing algorithm, with p-values smaller than 1 \% marked in bold. For the three sparse algorithms, the number of connections is always four, by construction; for three the dense algorithms it is always eight. Especially for noisy data, path lasso outperforms the competing algorithms.}
\label{tab:gauss}
\end{sidewaystable}

\begin{figure}
  \centering
  \includegraphics[width=1\textwidth]{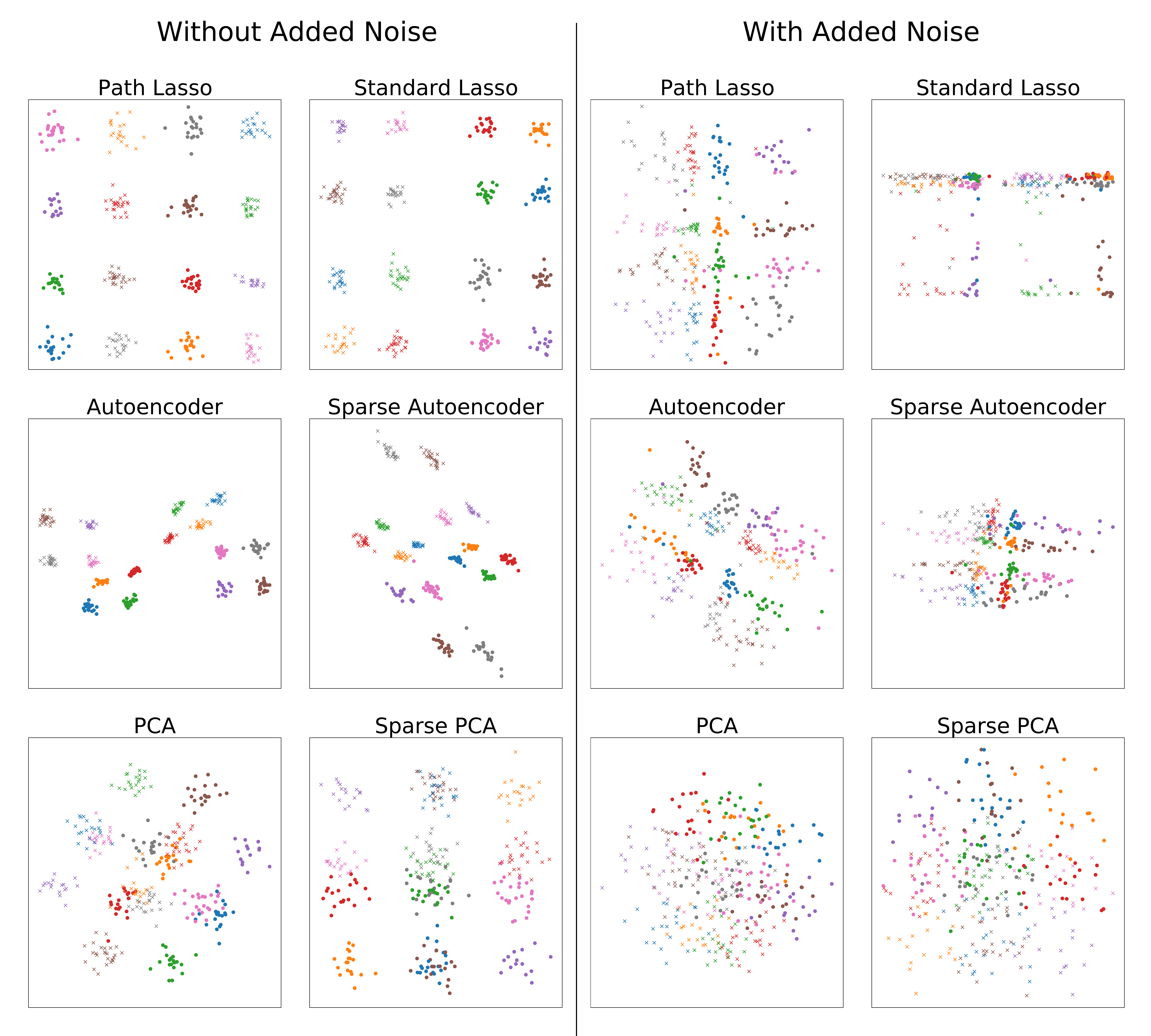}
  \caption{Reduction of 16 clusters from four to two dimension with and without added noise, using six different algorithms. Clusters that are diagonal to each other in $\R^4$ have the same color, but different markers.}
  \label{fig:gauss}
\end{figure}

\subsection{Text - 20 Newsgroup Data Set}
To test the algorithm on text data, the 20 newsgroups data set\footnote{Available at \url{http://qwone.com/~jason/20Newsgroups/}.} was used. Out of the original 20 categories, the following 4 were selected: \texttt{soc.religion.christian}, \texttt{sci.space}, \texttt{comp.windows.x} and \texttt{rec.sport.hockey}, which resulted in 31225 documents. Then, for each word the tf-idf score was calculated, and the 100 words with the highest score were kept, resulting in a $31225\times 100$ data matrix. Using a standard autoencoder, path lasso, standard lasso and sparse PCA, the 100 dimensional data set was mapped down to two, four and 25 dimensions, where the 25D case was added for comparing the test measures at moderate dimensionality reductions. For the autoencoder based algorithms, the layer widths were 100, 50, 2 (4, 25), 50 and 100 nodes. 

Two different sparsity levels were used; One sparse, to be able to compare interpretability, and one almost dense, to be able to compare to a standard autoencoder. In each case the penalties were set to obtain (approximately) the same sparsity, measured as number of connections, for all algorithms.

The latent spaces in the sparse case are shown in Figure \ref{fig:news}. In the 4D case, standard lasso only uses one latent dimension and is incapable of distinguishing between the categories, sparse PCA maps one category to each latent dimension, while path lasso has a tendency to map two categories to each latent dimension, one to the positive and one to the negative axis. This is further accentuated when compressing to only two dimensions. While path lasso is able to identify all four categories, sparse PCA only identifies two of them. Standard lasso still only uses one dimension and is not able to identify any categories at all. The use of only one latent dimension by standard lasso is likely attributed to the flexibility of the autoencoder, and without the structure in the penalty imposed by the path lasso algorithm, there is less incentive to use all parts of the latent space. The same tendency is visible in Figure \ref{fig:att_eig1}.

To see which original dimensions (words) contribute to which latent dimensions, the non-zero elements of $\Wgl$ can be used, but since all elements in this matrix are non-negative, it gives no information about the sign. Instead the corresponding signed matrix was created according to $\bm{W_2}\cdot \bm{W_1}+(\bm{W_4}\cdot \bm{W_3})^\top$, where $(\bm{W_1}, \bm{W_2})$ and $(\bm{W_3}, \bm{W_4})$ are the weight matrices of the encoder and the decoder, respectively. The signed words in the latent dimensions are presented in Tables \ref{tab:news_words2} and \ref{tab:news_words4}. The assignment of words to the 4 (8) latent half-axes done by path lasso and sparse PCA is consistent with the results in Figure \ref{fig:news}. Path lasso also seems to identify a subcategory of \texttt{comp.windows.x} related to e-mails.

In Tables \ref{tab:news_stats_high} and \ref{tab:news_stats_low}, remaining connections, explained variance and reconstruction match are presented for both sparsity levels. We conclude that path lasso performs best among the sparse algorithms both in terms of explained variance and reconstruction match. Compared to the standard autoencoder, path lasso performs slightly better in terms of reconstruction error and observation reconstruction match, which is in line with the results in Section \ref{sec:synth}. It is also noticeable that with increasing latent dimensionality the advantage of non-linear over linear models decreases.

\begin{figure*}
     \centering
     \begin{subfigure}[b]{\textwidth}
         \centering
         \includegraphics[width=0.94\textwidth]{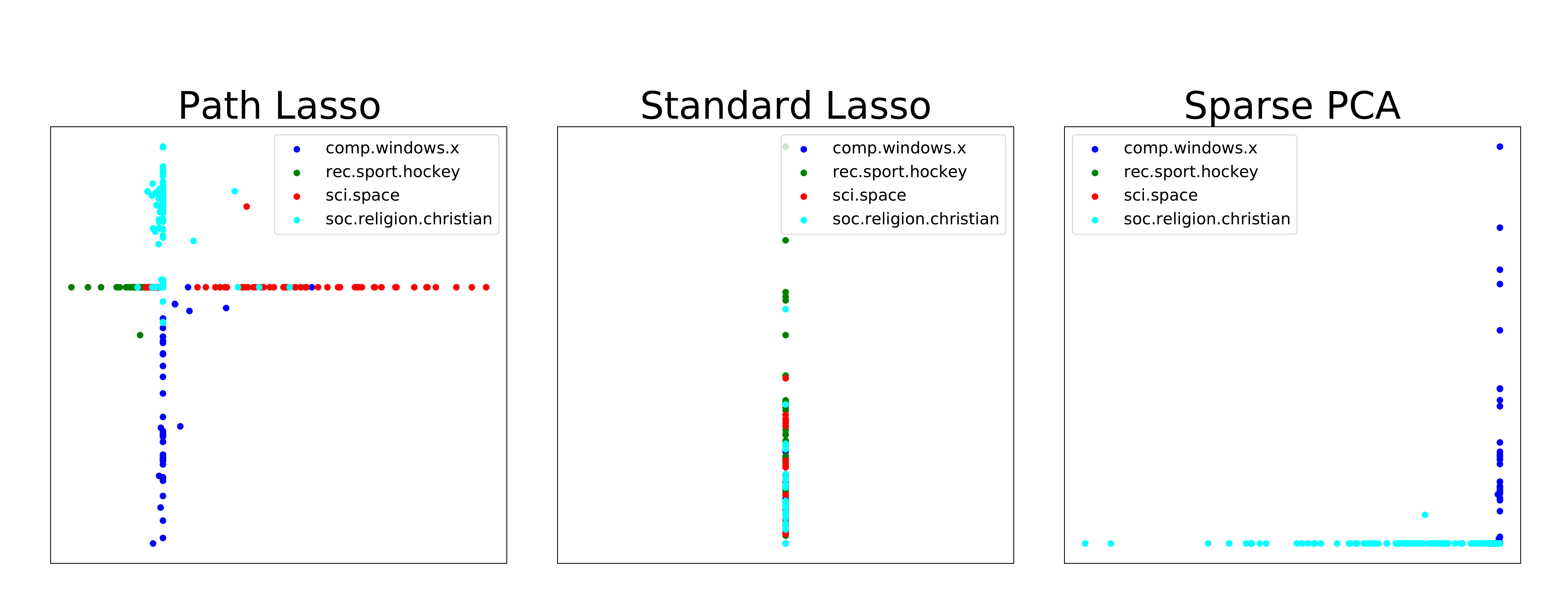}
         \caption{Latent 2D space for the newsgroup data}
         \label{fig:news2}
     \end{subfigure}
     \begin{subfigure}[b]{\textwidth}
         \centering
         \includegraphics[width=0.92\textwidth]{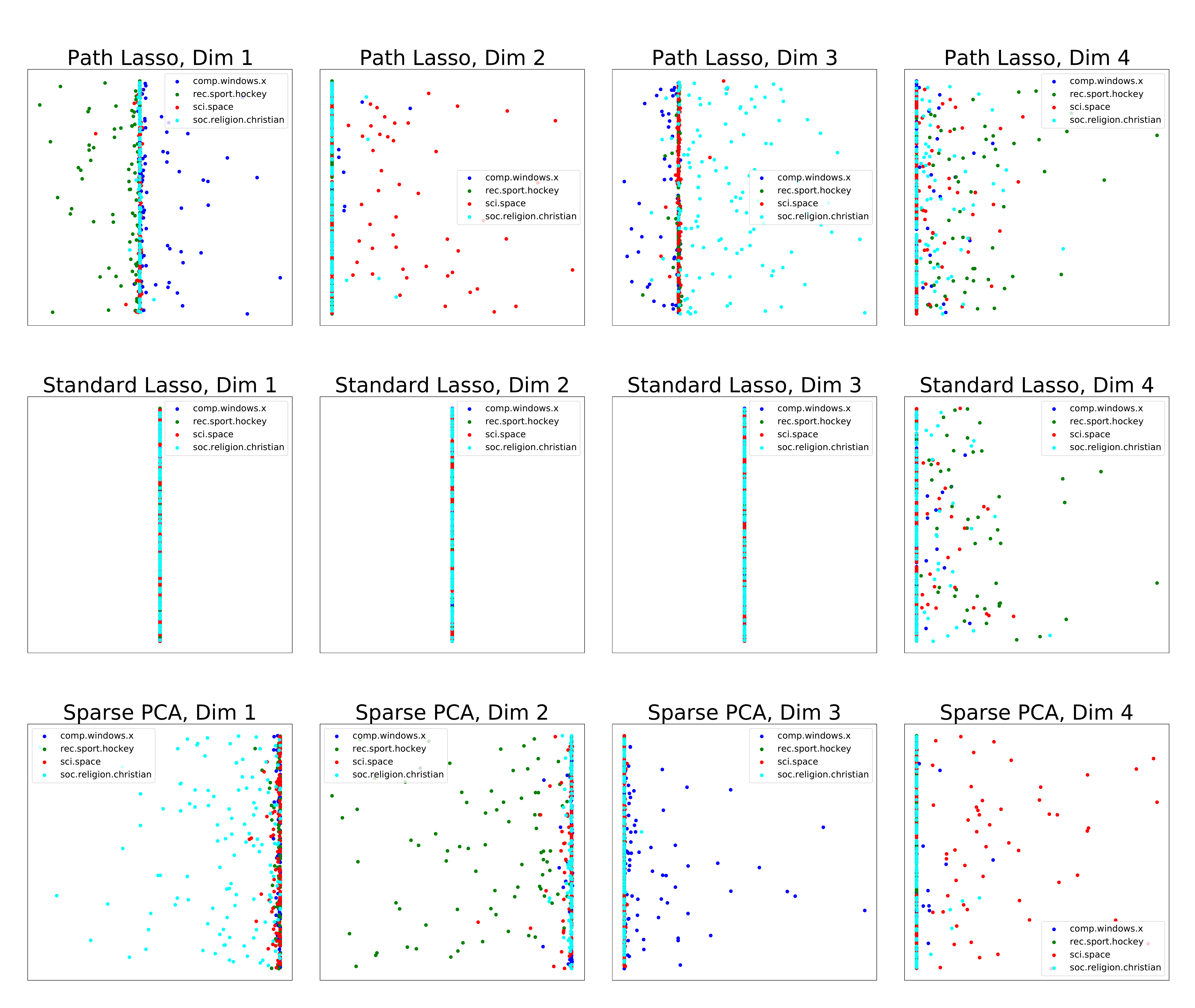}
         \caption{Latent 4D space for the newsgroup data}
         \label{fig:news4}
     \end{subfigure}
     \caption{Latent spaces of the news data, when compressed down to two and to four latent dimensions. In \ref{fig:news4} the y-axis value is just a random number, added to increase readability. Path lasso uses the latent space more efficiently than the other algorithms, being capable of mapping one category to each half-axis.}
     \label{fig:news}
\end{figure*}

\begin{table}
\centering
\begin{tabular}{|c|c|c|c|}
\hline
Algorithm & Dimension & Axis & Words\\
\hline
\hline
\multirow{4}{*}{Path Lasso} & \multirow{2}{*}{1 of 2} & Negative & \makecell{team, hockey, better, probably}\\
\cline{3-4}
 &  & Positive & \makecell{space}\\
\cline{2-4}
 & \multirow{2}{*}{2 of 2} & Negative & \makecell{window, hi}\\
\cline{3-4}
 &  & Positive & \makecell{god, church}\\
\cline{2-4}
\hline
\hline
\multirow{4}{*}{Standard Lasso} & \multirow{2}{*}{1 of 2} & Negative & \makecell{-}\\
\cline{3-4}
 &  & Positive & \makecell{-}\\
\cline{2-4}
 & \multirow{2}{*}{2 of 2} & Negative & \makecell{-}\\
\cline{3-4}
 &  & Positive & \makecell{people, good, team, new, did, hockey,\\ make, going, better, probably, lot}\\
\cline{2-4}
\hline
\hline
\multirow{4}{*}{Sparse PCA} & \multirow{2}{*}{1 of 2} & Negative & \makecell{god, people, jesus, believe,\\ bible, christ, faith, life}\\
\cline{3-4}
 &  & Positive & \makecell{-}\\
\cline{2-4}
 & \multirow{2}{*}{2 of 2} & Negative & \makecell{-}\\
\cline{3-4}
 &  & Positive & \makecell{window, application}\\
\cline{2-4}
\hline
\end{tabular}
\caption{Positive and negative words in the two latent dimension, with sign calculated as sign($W_2\cdot W_1+(W_4\cdot W_3)^\top$).}
\label{tab:news_words2}
\end{table}

\begin{table}
\centering
\begin{tabular}{|c|c|c|c|}
\hline
Algorithm & Dimension & Axis & Words\\
\hline
\hline
\multirow{8}{*}{\makecell{Path\\Lasso}} & \multirow{2}{*}{1 of 4} & Negative & \makecell{game, year, hockey, play,\\ games, players, season, nhl}\\
\cline{3-4}
 &  & Positive & \makecell{window, use, need, server, program, motif,\\ using, windows, application, widget}\\
\cline{2-4}
 & \multirow{2}{*}{2 of 4} & Negative & \makecell{-}\\
\cline{3-4}
 &  & Positive & \makecell{space}\\
\cline{2-4}
 & \multirow{2}{*}{3 of 4} & Negative & \makecell{know, thanks, edu, mail, hi, list}\\
\cline{3-4}
 &  & Positive & \makecell{god, don, think, people, jesus,\\ say, believe, church, christians,\\ christian, bible, christ, faith, life}\\
\cline{2-4}
 & \multirow{2}{*}{4 of 4} & Negative & \makecell{-}\\
\cline{3-4}
 &  & Positive & \makecell{team, new, hockey, going,\\ better, probably, lot}\\
\cline{2-4}
\hline
\hline
\multirow{8}{*}{\makecell{Standard\\Lasso}} & \multirow{2}{*}{1 of 4} & Negative & \makecell{-}\\
\cline{3-4}
 &  & Positive & \makecell{-}\\
\cline{2-4}
 & \multirow{2}{*}{2 of 4} & Negative & \makecell{-}\\
\cline{3-4}
 &  & Positive & \makecell{-}\\
\cline{2-4}
 & \multirow{2}{*}{3 of 4} & Negative & \makecell{-}\\
\cline{3-4}
 &  & Positive & \makecell{-}\\
\cline{2-4}
 & \multirow{2}{*}{4 of 4} & Negative & \makecell{god, like, know, don, think, space, does, christ,\\ time, window, thanks, use, jesus, way, say, sun,\\ believe, need, want, problem, edu, server, hi,\\ right, church, program, using, work, nasa, life,\\ christians, true, bible, help, mail, used, actually}\\
\cline{3-4}
 &  & Positive & \makecell{just, people, good, team, new, did, hockey,\\ make, going, better, probably, lot}\\
\cline{2-4}
\hline
\hline
\multirow{8}{*}{\makecell{Sparse\\PCA}} & \multirow{2}{*}{1 of 4} & Negative & \makecell{god, don, think, people, does, jesus, say,\\ believe, church, things, question, said,\\ christians, true, christian, bible, come,\\ world, point, christ, faith, life}\\
\cline{3-4}
 &  & Positive & \makecell{-}\\
\cline{2-4}
 & \multirow{2}{*}{2 of 4} & Negative & \makecell{game, good, team, year, hockey, play,\\ games, players, season, better, best, nhl}\\
\cline{3-4}
 &  & Positive & \makecell{use}\\
\cline{2-4}
 & \multirow{2}{*}{3 of 4} & Negative & \makecell{-}\\
\cline{3-4}
 &  & Positive & \makecell{window, server, program, motif,\\ using, windows, application, widget}\\
\cline{2-4}
 & \multirow{2}{*}{4 of 4} & Negative & \makecell{-}\\
\cline{3-4}
 &  & Positive & \makecell{space, nasa, earth}\\
\cline{2-4}
\hline
\end{tabular}
\caption{Positive and negative words in the four latent dimension, with sign calculated as sign($W_2\cdot W_1+(W_4\cdot W_3)^\top$).}
\label{tab:news_words4}
\end{table}

\begin{table}
\centering
\begin{tabular}{|l|l|l|l|l|l|}
\hline
\multirow{2}{*}{Dimensions} & \multirow{2}{*}{Algorithm} & \multirow{2}{*}{Connections} & \multirow{2}{*}{$R^2$}
& \multicolumn{2}{c|}{Reconstruction Match}\\
\cline{5-6}
& & & & Observation & Label\\
\hline
2 & Path lasso & 9 & 0.13 & 0.021 & 0.47\\
2 & Standard Lasso & 11 & 0.015 & 0.0021 & 0.25\\
2 & Sparse PCA & 10 & 0.028 & 0.0084 & 0.29\\
\hline
4 & Path lasso & 46 & 0.20 & 0.027 & 0.55\\
4 & Standard Lasso & 47 & 0.015 & 0.0021 & 0.25\\
4 & Sparse PCA & 46 & 0.11 & 0.017 & 0.43\\
\hline
25 & Path Lasso & 102 & 0.55 & 0.34 & 0.66\\
25 & Standard Lasso & 104 & 0.069 & 0.0063 & 0.33\\
25 & Sparse PCA & 104 & 0.40 & 0.14 & 0.56\\
\hline
\end{tabular}
\caption{Number of remaining connections, explained variance and reconstruction match for the three algorithms on the newsgroup data at high sparsity. Path lasso performs best both in terms of explained variance and reconstruction match.}
\label{tab:news_stats_high}
\end{table}

\begin{table}
\centering
\begin{tabular}{|l|l|l|l|l|l|}
\hline
\multirow{2}{*}{Dimensions} & \multirow{2}{*}{Algorithm} & \multirow{2}{*}{Connections} & \multirow{2}{*}{$R^2$}
& \multicolumn{2}{c|}{Reconstruction Match}\\
\cline{5-6}
& & & & Observation & Label\\
\hline
2 & Autoencoder & 200 & 0.24 & 0.048 & 0.56\\
2 & Path Lasso & 194 & 0.27 & 0.055 & 0.54\\
2 & Standard Lasso & 196 & 0.073 & 0.0042 & 0.35\\
2 & Sparse PCA & 195 & 0.051 & 0.0063 & 0.32\\
\hline
4 & Autoencoder & 400 & 0.33 & 0.080 & 0.58\\
4 & Path Lasso & 396 & 0.34 & 0.10 & 0.56\\
4 & Standard Lasso & 398 & 0.19 & 0.019 & 0.45\\
4 & Sparse PCA & 397 & 0.14 & 0.015 & 0.43\\
\hline
25 & Autoencoder & 2500 & 0.60 & 0.47 & 0.71\\
25 & Path Lasso & 2475 & 0.61 & 0.49 & 0.72\\
25 & Standard Lasso & 2476 & 0.44 & 0.16 & 0.61\\
25 & Sparse PCA & 2486 & 0.44 & 0.15 & 0.57\\
\hline
\end{tabular}
\caption{Number of remaining connections, explained variance and reconstruction match for the four algorithms on the newsgroup data at low sparsity. Path lasso performs better than standard lasso and sparse PCA both in terms of explained variance and reconstruction match. Compared to the standard autoencoder path lasso performs slightly better in terms of explained variance and observation reconstruction match.}
\label{tab:news_stats_low}
\end{table}

\subsection{Images - AT\&T Face Database}
We also tested the algorithm on the AT\&T face database \citep{samaria1994parameterisation}, which contains 400 grayscale images of faces. The images were compressed to a size of $60\times 50$ pixels, after which the 3000 dimensional images where reduced to 5 dimensions. Both the encoder and the decoder had 1000 units wide hidden layers, and to assure that a pixel that was disconnected from all the latent dimensions got a value of zero, no bias parameters were used.

Again, each algorithm was penalized to obtain (approximately) the same number of connections, at two different sparsity levels. One low, with almost all of the $5\cdot 3000=15000$ connections kept to be able to compare to a dense, $l_2$-regularized autoencoder; and one high, with only a fourth of the connections kept, to compare the algorithms at more extreme sparsity levels.

The results are summarized in Table \ref{tab:att}. Since the images are unlabeled the label reconstruction match was replaced by a 10 nearest neighbor reconstruction match, where an observation is considered correctly reconstructed if its own original observation is among the ten closest original observations.

Path lasso and standard lasso do much better than sparse PCA in terms of reconstruction, and at low sparsity they are on par with the dense autoencoder. For observation reconstruction match, path lasso outperforms the other two algorithms, and at low sparsity also the standard autoencoder. For the 10 nearest neighbors reconstruction match, all non-linear algorithms do very well at low sparsity, while at high sparsity path lasso does better than the other algorithms.

The five eigenfaces, calculated as Dec$(e_i)$ where Dec is the decoder and $e_i$ is the i-th standard basis vector in $\R^5$, are shown in the left column of Figure \ref{fig:att}. For PCA, the decoder function corresponds to multiplication with the loadings matrix, whose i-th column is the i-th eigenface. A pixel with value zero is colored red.
Sparse PCA distributes its non-zero pixels more evenly among the latent dimensions, while again standard lasso does not use all 5 latent dimensions.
The right column shows the reconstruction of some of the images in the test set, using the four different algorithms. The reconstructed images using path lasso look more diversified than for the other sparse algorithms, something that is in line with the superior reconstruction match of path lasso.

\begin{table}
\centering
\begin{tabular}{|l|l|l|l|l|}
\hline
\multirow{2}{*}{Algorithm} & \multirow{2}{*}{Connections} & \multirow{2}{*}{$R^2$}
& \multicolumn{2}{c|}{Reconstruction Match}\\
\cline{4-5}
& & & Observation & 10 Nearest Neighbors\\
\hline
Autoencoder & 15000 & 0.73 & 0.66 & 0.97\\
\hline
Path Lasso & 14923 & 0.72 & 0.70 & 0.96\\
Standard Lasso & 14936 & 0.70 & 0.55 & 0.96\\
Sparse PCA & 14923 & -0.24 & 0.16 & 0.60\\
\hline
Path Lasso & 3770 & 0.57 & 0.21 & 0.75\\
Standard Lasso & 3771 & 0.44 & 0.025 & 0.26\\
Sparse PCA & 3770 & -1.2 & 0.10 & 0.46\\
\hline
\end{tabular}
\caption{Number of remaining connections, explained variance and reconstruction match for the four algorithms on the image data. Path lasso outperforms the competing algorithms in terms of observation reconstruction match, and at low sparsity even beats the fully connected autoencoder.}
\label{tab:att}
\end{table}

\begin{figure*}
     \raggedleft
     \begin{subfigure}[b]{0.33\textwidth}
         \centering
         \includegraphics[width=\textwidth]{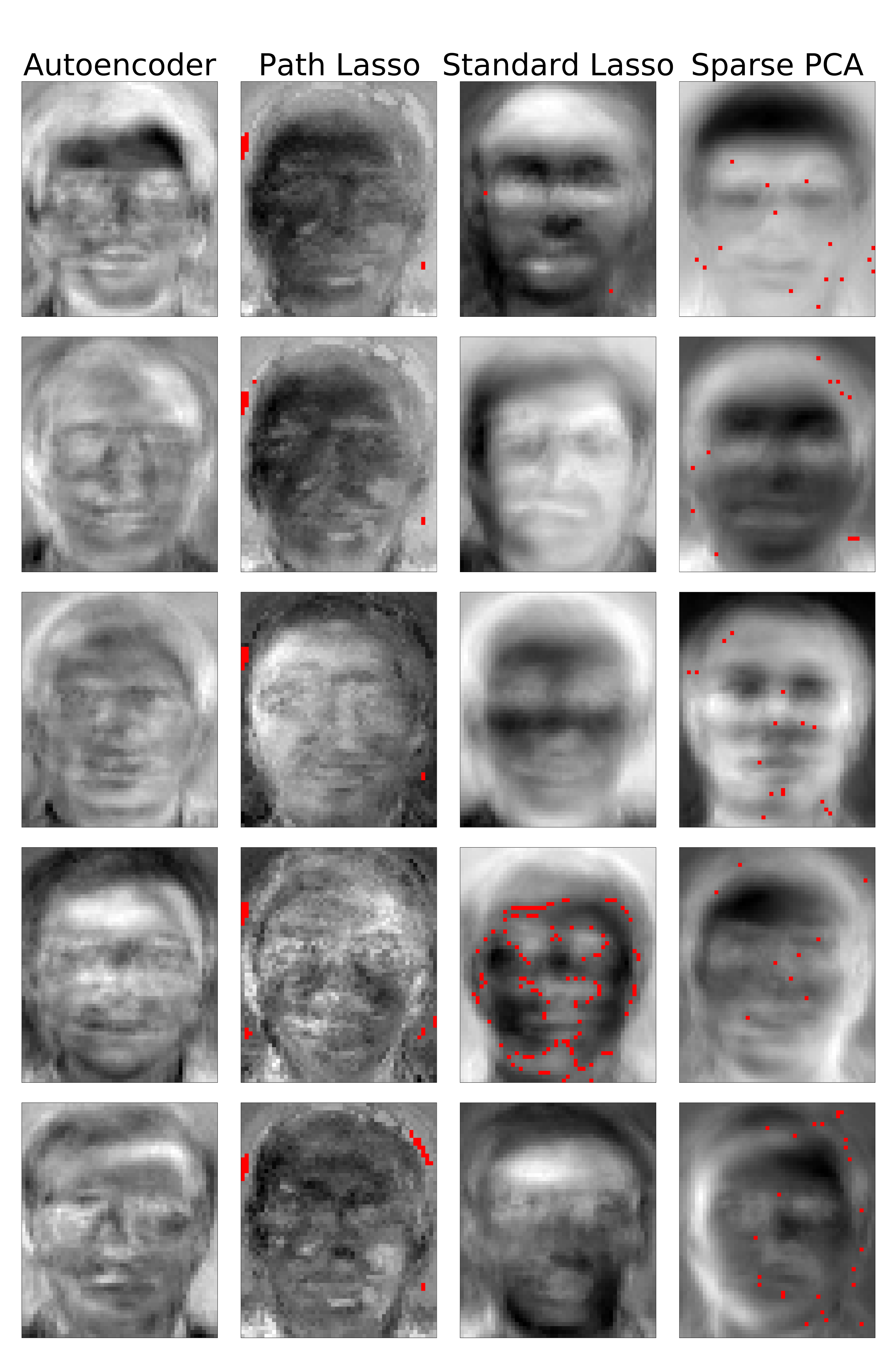}
         \caption{Eigenfaces at low sparsity}
         \label{fig:att_eig2}
     \end{subfigure}
     \hfill
     \raggedright
     \begin{subfigure}[b]{0.40\textwidth}
         \centering
         \includegraphics[width=\textwidth]{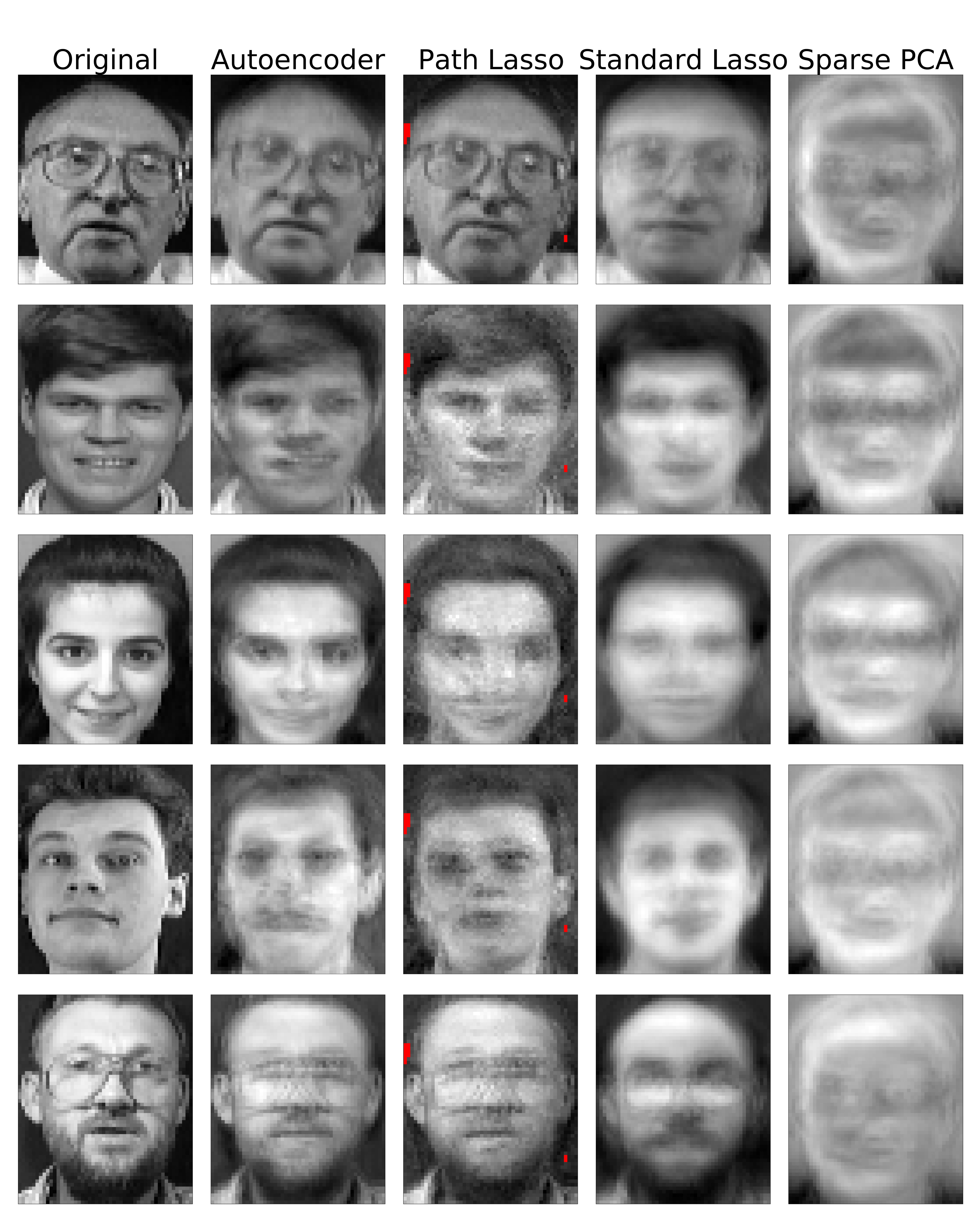}
         \caption{Reconstructions at low sparsity}
         \label{fig:att_rec2}
     \end{subfigure}
     \raggedleft
     \begin{subfigure}[b]{0.30\textwidth}
         \centering
         \includegraphics[width=\textwidth]{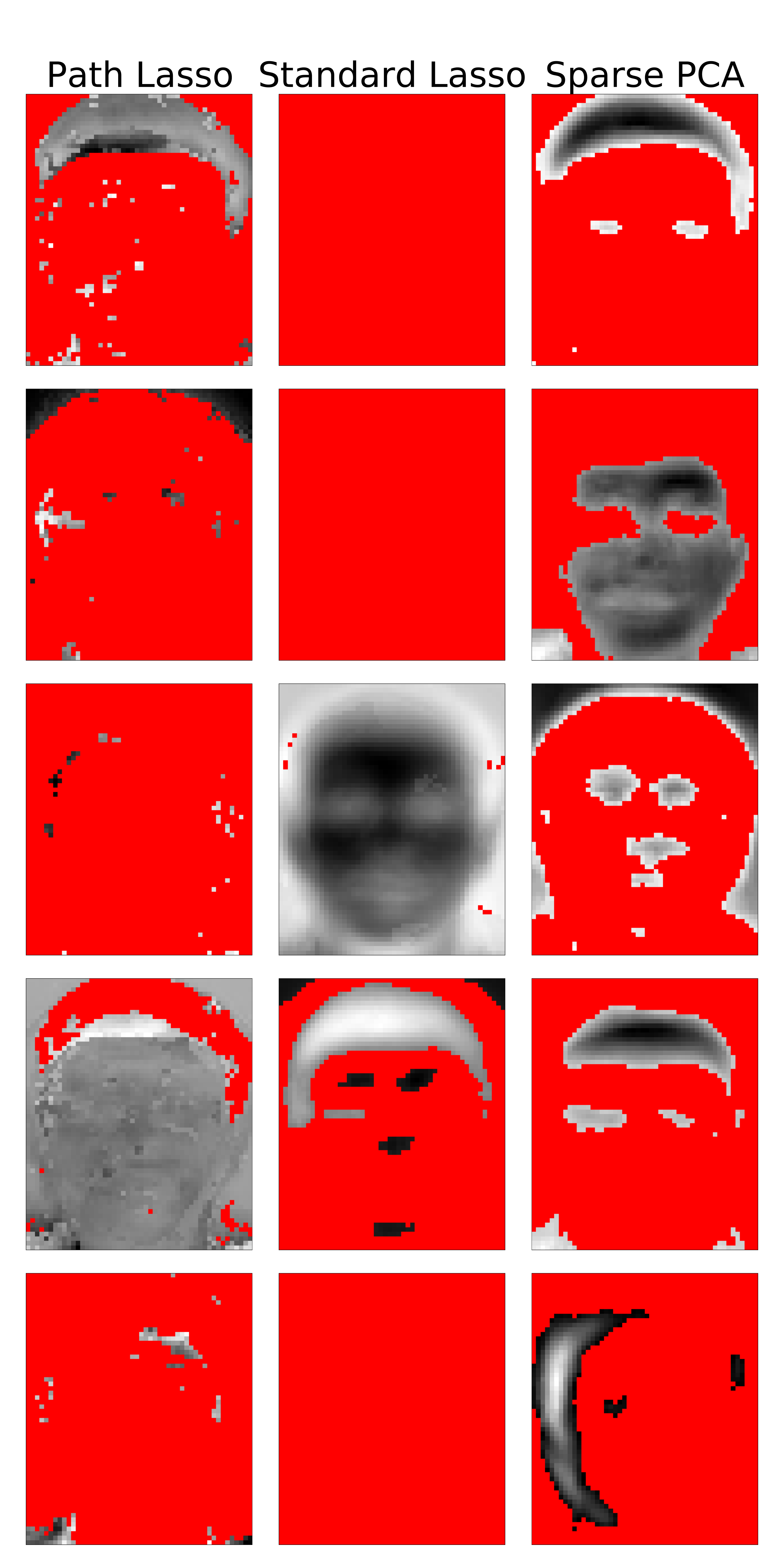}
         \caption{Eigenfaces at high sparsity}
         \label{fig:att_eig1}
     \end{subfigure}
     \hfill
     \raggedright
     \begin{subfigure}[b]{0.40\textwidth}
         \centering
         \includegraphics[width=\textwidth]{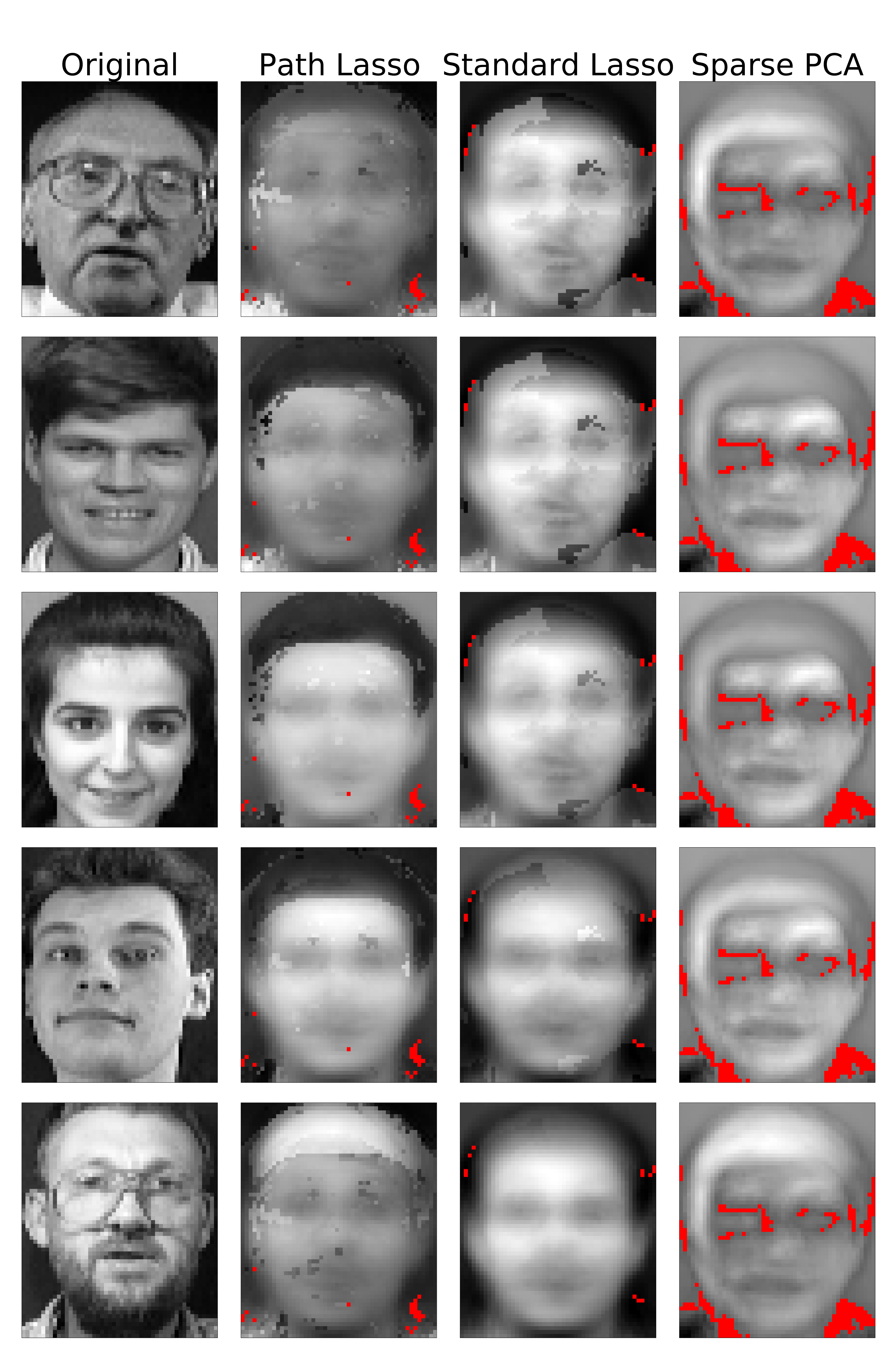}
         \caption{Reconstructions at high sparsity}
         \label{fig:att_rec1}
     \end{subfigure}
     \caption{Eigenfaces and examples of reconstructions of test faces at small and large penalization. Zeros are indicated with red. The results of the dense autoencoder are presented together with the low sparsity results. The reconstructed images using path lasso look more diverse than those of the other sparse algorithms. At high sparsity standard lasso does not use all 5 latent dimensions.}
     \label{fig:att}
\end{figure*}

\section{Conclusions}
\label{sec:conculsions}
We proposed path lasso, a penalty that creates structured sparsity in feedforward neural networks, by using a group lasso penalty to remove all connections between two nodes in two non-adjacent layers. We applied path lasso to an autoencoder to obtain sparse, non-linear dimensionality reduction, and showed that this non-linearity makes path lasso much more flexible than sparse PCA, leading to a lower reconstruction error and a higher reconstruction match. Thanks to its higher flexibility path lasso is also able to use the latent space more efficiently, something that proved essential when the latent dimensions were few. Compared to an autoencoder with individually lasso penalized links, path lasso performed better in terms of reconstruction, reconstruction match and interpretation of the latent space. In addition, at low sparsity levels path lasso resulted in a better reconstruction match than the standard autoencoder.

Using path lasso for non-linear, sparse dimensionality reduction, flexibility and interpretability can be combined in a new way, enabling compression to fewer dimensions, with preserved interpretability. In this paper, we used the path lasso penalty in an autoencoder. However, path lasso can be used in many other types of feedforward neural networks with applications including non-linear, sparse multivariate regression, and non-linear, sparse network models. Such investigations are left for future work.

Code is available at \url{https://github.com/allerbo/path_lasso}.

\section*{Acknowledgments}
We would like to thank the anonymous reviewers, whose suggestions helped improve and clarify the manuscript.

This research was supported by funding from the Swedish Research Council (VR), the Swedish Foundation for Strategic Research, the Wallenberg AI, Autonomous Systems and Software Program (WASP), and the Chalmers AI Research Center (CHAIR).

\clearpage
\appendix

\section{Accelerating the Non-Negative Matrix Factorization}
\label{sec:speed}
To increase the speed of the matrix factorization step in Equation \eqref{eq:eq_syst_mat} the following three approaches were used: Substitution, parallelization and Boolean matrix factorization.

\subsection{Substitution}
\label{sec:substitution}
By first applying an off-the-shelf optimization method to $f(\bm{\theta})=g(\bm{\theta})+\lambda\Wgl$, where $g(\bm{\theta})$ is the reconstruction error, we obtain a solution where some of the paths have very small values, although non-zero. We then, as in adaptive lasso, use this solution to initialize a second optimization stage, using proximal gradient descent, with individual penalties for each connection, where the penalties depend on the magnitude of the connection after the first stage:
\begin{equation*}
\lambda_{i_L,i_0} := \frac{\lambda}{((\Wglhat)_{i_Li_0})^\gamma}
\end{equation*}
where $\Wglhat$ is the value of $\Wgl$ after the optimization with the off-the-shelf optimizer and $\gamma >0$. Just as in Section \ref{sec:apply_pen}, $\gamma=2$ was used. Choosing a relatively small value of $\lambda$, connections that are not close to zero after the first stage will be left more or less unpenalized, while connections close to zero will be penalized hard. Another advantage of this two stage procedure is that we are able to further increase the speed in the first stage by leveraging on momentum based optimization methods, which are not available for proximal gradient descent.

\subsection{Parallelization}
In general, the larger the weight matrices, the more time consuming is the factorization of the penalized path matrix into penalized weight matrices. By splitting the weight matrices into blocks, the factorization can instead be done in parallel for smaller sub-matrices. In the simplest example, three layers are split into two parts each. This corresponds to two weight matrices, whose rows and columns are both split into two blocks each, and is shown in Equation \eqref{eq:block_fact} below. Let $\bm{A} \in \Rr^{d_3\times d_2}$ and $\bm{B} \in \Rr^{d_2\times d_1}$ denote the absolute valued weight matrices and let $\bm{C}+\bm{D} \in \Rr^{d_3\times d_1}$ denote the penalized path matrix, where the sum of the paths in $(\bm{C}+\bm{D})_{ij}$ is split into $\bm{C}_{ij}$, containing the sum of some of the paths, and $\bm{D}_{ij}$, containing the sum of the remaining paths. Which paths belong to $\bm{C}$ and which belong to $\bm{D}$ depends on how $\bm{A}$ and $\bm{B}$ are split. Then, the factorization of $\bm{C}+\bm{D}$ into $\bm{A}\cdot \bm{B}$ can be divided into $2^3= 8$ smaller factorizations which can be solved in parallel, i.e.

\begin{equation}
\label{eq:block_fact}
\left[
\begin{array}{c|c}
\bm{C_{11}}+\bm{D_{11}} & \bm{C_{12}}+\bm{D_{12}}\\
\hline
\bm{C_{21}}+\bm{D_{21}} & \bm{C_{22}}+\bm{D_{22}}\\
\end{array}
\right]
=
\left[
\begin{array}{c|c}
\bm{A_{11}} & \bm{A_{12}}\\
\hline
\bm{A_{21}} & \bm{A_{22}}\\
\end{array}
\right]
\cdot
\left[
\begin{array}{c|c}
\bm{B_{11}} & \bm{B_{12}}\\
\hline
\bm{B_{21}} & \bm{B_{22}}\\
\end{array}
\right]\\
\end{equation}
can be split into
\begin{equation*}
\begin{aligned}
&\left[
\begin{array}{c|c}
\bm{C_{11}} & \bm{C_{12}}\\
\hline
\bm{C_{21}} & \bm{C_{22}}\\
\end{array}
\right]
=
\left[
\begin{array}{c|c}
\bm{A_{11}} & \bm{0}\\
\hline
\bm{A_{21}} & \bm{0}\\
\end{array}
\right]
\cdot
\left[
\begin{array}{c|c}
\bm{B_{11}} & \bm{B_{12}}\\
\hline
\bm{0} & \bm{0} \\
\end{array}
\right]
=
\left[
\begin{array}{c|c}
\bm{A_{11}}\cdot \bm{B_{11}} & \bm{A_{11}}\cdot \bm{B_{12}}\\
\hline
\bm{A_{21}}\cdot \bm{B_{11}} & \bm{A_{21}}\cdot \bm{B_{12}}\\
\end{array}
\right] \\
&\left[
\begin{array}{c|c}
\bm{D_{11}} & \bm{D_{12}}\\
\hline
\bm{D_{21}} & \bm{D_{22}}\\
\end{array}
\right]
=
\left[
\begin{array}{c|c}
\bm{0} & \bm{A}_{12}\\
\hline
\bm{0} & \bm{A}_{22}\\
\end{array}
\right]
\cdot
\left[
\begin{array}{c|c}
\bm{0} & \bm{0} \\
\hline
\bm{B_{21}} & \bm{B_{22}}\\
\end{array}
\right]
=
\left[
\begin{array}{c|c}
\bm{A_{12}}\cdot \bm{B_{21}} & \bm{A_{12}}\cdot \bm{B_{22}}\\
\hline
\bm{A_{22}}\cdot \bm{B_{21}} & \bm{A_{22}}\cdot \bm{B_{22}}\\
\end{array}
\right] \\
\end{aligned}
\end{equation*}
or equivalently
\begin{equation*}
\begin{aligned}
&\bm{C_{11}}=\bm{A_{11}}\cdot \bm{B_{11}},\ \bm{C_{12}}=\bm{A_{11}}\cdot \bm{B_{12}},\ \bm{C_{21}}=\bm{A_{21}}\cdot \bm{B_{11}},\ \bm{C_{22}}=\bm{A_{21}}\cdot \bm{B_{12}}\\
&\bm{D_{11}}=\bm{A_{12}}\cdot \bm{B_{21}},\ \bm{D_{12}}=\bm{A_{12}}\cdot \bm{B_{22}},\ \bm{D_{21}}=\bm{A_{22}}\cdot \bm{B_{21}},\ \bm{D_{22}}=\bm{A_{22}}\cdot \bm{B_{22}}.
\end{aligned}
\end{equation*}

Here, $\bm{A_{ij}} \in \Rr^{d_{3i}\times d_{2j}}$, $\bm{B_{ij}} \in \Rr^{d_{2i}\times d_{1j}}$ and $\bm{C_{ij}}, \bm{D_{ij}} \in \Rr^{d_{3i}\times d_{1j}}$, where $i,j \in \{1,2\}$ and $d_{ki}+d_{kj}=d_k$ for $k=1,2,3$, are all block matrices. This results in two, potentially different, solutions for each $\bm{A_{ij}}$ and $\bm{B_{ij}}$, that need to be aggregated. To be conservative, and set a link to zero only when it is zero in both solutions, we use the maximum value element-wise for aggregation:
\begin{equation*}
(\bm{A_{ij}})_{kl} = \max_{n\in \{1,2\}} (\bm{A^n_{ij}})_{kl}
\end{equation*}
where $\bm{A^n_{ij}}$ is the $n$-th solution of $\bm{A_{ij}}$. Remember that all elements in all matrices are non-negative.

This generalizes trivially to more layers and splits and the number of equations is always the product of the splits over all the layers.

\subsection{Boolean Matrix Factorization}
Another way to speed up the computations is early stopping of the matrix factorization. This may however lead to some of the elements on the right hand side in Equation \eqref{eq:eq_syst_mat} being very close, but not equal, to zero, introducing the need to threshold. We determine the optimal threshold value using Boolean matrix multiplication, which differs from ordinary matrix multiplication in the following way:
\begin{itemize}
\item All elements are in $\{0,1\}$.
\item Instead of ordinary sum, Boolean sum (or Boolean \textbf{or}), $\vee$, is used: $0 \vee 0 = 0$, $0 \vee 1 = 1$, $1 \vee 0 = 1$, $1 \vee 1 = 1$.
\end{itemize}
We define a Boolean matrix for each matrix in Equation \eqref{eq:eq_syst_mat} as:
\begin{equation*}
\begin{aligned}
(\bm{B})_{ij} &:=\mathbb{I}\left[\left(\prod_{l=L,\dots,1} |\bm{W_l}|\odot\left(1-\frac{\alpha\lambda}{\Wgl}\right)^+\right)_{ij} > 0\right]\\
(\bm{B_l})_{ij}(\tau)&:=\mathbb{I}\left[(\bm{\tilde{W}_l})_{ij} > \tau\right],\ \tau\geq 0
\end{aligned}
\end{equation*}
where $\mathbb{I[\cdot]}$ is the (element-wise) indicator function which equals 1 when its argument is true and 0 otherwise. Thus, in the matrix on the left hand side in Equation \eqref{eq:eq_syst_mat}, zero elements become zeros and all other elements become ones, while for the matrices on the right hand side, we use a threshold to decide which elements should be set to zero. In the first case, an element of one means that there is an non-zero connection, while in the second case it means that there is an non-zero link. If $\bm{B} = \bigvee_{l=L,\dots,1} \bm{B_l}$, where $\bigvee$ denotes Boolean matrix multiplication, then the matrix factorization was successful in the sense that all connections in $\Wgl$ are represented by appropriate links in the weight matrices. Each differing element means that either two nodes are disconnected in $\Wgl$ but not in $\{\bm{\tilde{W}_l}\}_{l=1}^L$ or vice versa. To minimize this quantity, we minimize $\sum |\bm{B} - \bigvee_{l=L,\dots,1} \bm{B_l}(\tau)|$, with the absolute value taken element-wise, with respect to $\tau$, and then threshold $\{\bm{\tilde{W}_l}\}_{l=1}^L$ using the resulting $\tau$:
\begin{equation*}
\begin{aligned}
&\tau = \underset{\tau\geq 0}{\arg\!\min}\sum_{\textrm{all elements}}\left|\mathbb{I}\left[\prod_{l=L,\dots,1} |\bm{W_l}|\odot\left(1-\frac{\alpha\lambda}{\Wgl}\right)^+>0\right]-\bigvee_{l=L,\dots,1}\mathbb{I}[\bm{\tilde{W}_l}>\tau]\right|\\
&\bm{\tilde{W}_l} \leftarrow \bm{\tilde{W}_l} \odot \mathbb{I}[\bm{\tilde{W}_l}>\tau]
\end{aligned}
\end{equation*}
Since the objective function is piecewise constant and thus hard to optimize, we simply evaluate it for 20 logarithmically spaced values of $\tau \in [10^{-10},1]$ and pick the best one.

\section{Proofs}
\label{sec:proofs}
\begin{proof}[Proof of Proposition \ref{thm:GL_eq_paths}]
Using the definition of matrix multiplication, with $k$ running over all paths connecting $x_{i_0}$ and $y_{i_L}$, we obtain
\begin{equation*}
\begin{aligned}
\sqrt{\sum_{k\in g_{i_Li_0}} (p^k_{i_Li_0})^2}&=\sqrt{\sum_{i_{L-1}=1}^{d_{L-1}}\sum_{i_{L-2}=1}^{d_{L-2}}\dots\sum_{i_1=1}^{d_1} (|w^L_{i_Li_{L-1}}|\cdot |w^{L-1}_{i_{L-1}i_{L-2}}|\cdot \dots |w^1_{i_1i_0}|)^2}\\
&=\sqrt{\sum_{i_{L-1}=1}^{d_{L-1}}\sum_{i_{L-2}=1}^{d_{L-2}}\dots\sum_{i_1=1}^{d_1} (w^L_{i_Li_{L-1}})^2\cdot (w^{L-1}_{i_{L-1}i_{L-2}})^2\cdot \dots (w^1_{i_1i_0})^2}\\
&=\left(\sqrt{(\bm{W_L})^2\cdot(\bm{W_{L-1}})^2\cdot \dots (\bm{W_1})^2}\right)_{i_Li_0}=(\Wgl)_{i_Li_0}.
\end{aligned}
\end{equation*}
\end{proof}

To prove Proposition \ref{thm:GL_der} we begin with the following lemma:
\begin{lemma}\label{lemma:mat_sq}
Let $\{\bm{A_k}\}_{k=1}^n$ be a set of arbitrary matrices and $\{\bm{D_k}\}_{k=1}^n$ a set of diagonal matrices, where all elements in $\bm{A_k}$ and $\bm{D_k}$ are bounded. Let $f\geq 0$ an element-wise function where $f(x)=0$ if and only if $x=0$. Let the dimensions of the matrices be such that the matrix multiplications below make sense. Then
$$(f(\bm{A_1})\cdot f(\bm{A_2}) \cdot \dots f(\bm{A_n}))_{ij} = 0 \implies (\bm{D_1}\cdot \bm{A_1}\cdot \bm{D_2}\cdot \bm{A_2}\cdot \bm{D_2}\cdot \dots\, \bm{D_n} \cdot \bm{A_n})_{ij}.$$
\end{lemma}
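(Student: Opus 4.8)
The plan is to expand both matrix products entrywise as sums over multi-indices (``paths'' in the graph sense) and to notice that the two expansions are indexed by the \emph{same} set, since the diagonal matrices $\bm{D_k}$ contribute no new summation indices. Fix the output position $(i,j)$, set $k_0:=i$ and $k_n:=j$, and let a generic path be $\bm{k}=(k_1,\dots,k_{n-1})$ ranging over the product of the appropriate index sets. By the definition of matrix multiplication,
$$\big(f(\bm{A_1})\cdot f(\bm{A_2})\cdots f(\bm{A_n})\big)_{ij}=\sum_{\bm{k}}\ \prod_{m=1}^{n} f\big((\bm{A_m})_{k_{m-1}k_m}\big),$$
and, using only that each $\bm{D_m}$ is diagonal,
$$\big(\bm{D_1}\bm{A_1}\bm{D_2}\bm{A_2}\cdots \bm{D_n}\bm{A_n}\big)_{ij}=\sum_{\bm{k}}\ \Big(\prod_{m=1}^{n}(\bm{D_m})_{k_{m-1}k_{m-1}}\Big)\Big(\prod_{m=1}^{n}(\bm{A_m})_{k_{m-1}k_m}\Big).$$

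First I would analyze the left-hand product. Since $f\geq 0$, every summand $\prod_{m} f\big((\bm{A_m})_{k_{m-1}k_m}\big)$ is nonnegative, so the hypothesis that the sum equals $0$ forces each summand to vanish. A product of finitely many nonnegative reals is zero only if some factor is zero, so for every path $\bm{k}$ there is an index $m=m(\bm{k})$ with $f\big((\bm{A_{m}})_{k_{m-1}k_{m}}\big)=0$; by the assumption that $f(x)=0\iff x=0$, this yields $(\bm{A_{m}})_{k_{m-1}k_{m}}=0$. In words: along every path at least one $\bm{A}$-link is exactly zero.

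Then I would substitute this into the second expansion. For each fixed path $\bm{k}$ the corresponding term contains the factor $(\bm{A_{m(\bm{k})}})_{k_{m(\bm{k})-1}k_{m(\bm{k})}}=0$; since all entries of every $\bm{A_k}$ and $\bm{D_k}$ are bounded, all the other factors in that term are finite, so the whole term equals $0$. Summing over all paths $\bm{k}$ gives $\big(\bm{D_1}\bm{A_1}\bm{D_2}\bm{A_2}\cdots \bm{D_n}\bm{A_n}\big)_{ij}=0$, which is the assertion.

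There is no genuinely hard step here; the proof is essentially bookkeeping. The one point to be careful about is the claim that the two entrywise expansions share the same index set $\bm{k}$ — this is precisely where diagonality of the $\bm{D_k}$ is used, since a non-diagonal inner matrix would introduce an extra summation index and destroy the path-by-path correspondence — together with the small but necessary remark that boundedness of all entries makes ``$0$ times the remaining factors'' a genuine $0$ rather than an indeterminate expression.
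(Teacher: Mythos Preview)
Your proof is correct and follows essentially the same route as the paper's: expand both matrix products entrywise over the same multi-index (``path'') set, use $f\ge 0$ to conclude that the vanishing of the sum forces every individual term to vanish, apply $f(x)=0\iff x=0$ to get a zero $\bm{A}$-entry on each path, and then observe that the corresponding term on the right is a finite product containing a zero factor. Your write-up is in fact a bit more explicit than the paper's about \emph{why} diagonality of the $\bm{D_k}$ keeps the index set unchanged and about the role of boundedness, but the argument is the same.
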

\begin{proof}
Denote $a^k_{ij}:=(A_k)_{ij}$ and $d^k_{ij}:=(D_k)_{ij}$. Then for the left hand side
\begin{equation*}
(f(\bm{A_1})\cdot f(\bm{A_2})\cdot \dots f(\bm{A_n}))_{ij} = \sum_{k_1}\sum_{k_2}\dots \sum_{k_{n-1}}f(a^1_{ik_1})\cdot f(a^2_{k_1k_2})\cdot \dots f(a^n_{k_{n-1}j})
\end{equation*}
and for the right hand side
\begin{equation*}
(\bm{D_1} \cdot \bm{A_1}\cdot \bm{D_2} \cdot \bm{A_2} \cdot \dots\, \bm{D_n} \cdot \bm{A_n})_{ij}=\sum_{k_1}\sum_{k_2}\dots \sum_{k_{n-1}} a^1_{ik_1}\cdot a^2_{k_1k_2}\cdot \dots a^n_{k_{n-1}j}\cdot d^1_{ii}\cdot d^2_{k_1k_1}\cdot \dots d^n_{k_{n-1}k_{n-1}}.
\end{equation*}
By the definition of $f$, $f(a^1_{ik_1})\cdot f(a^2_{k_1k_2})\cdot \dots f(a^n_{k_{n-1}j}) \geq 0$ with equality only if at least one of $a^1_{ik_1},\ a^2_{k_1k_2},\dots a^n_{k_{n-1}j}$ is zero, which implies that $a^1_{ik_1}\cdot a^2_{k_1k_2}\cdot \dots a^n_{k_{n-1}j} =0$.

The sum
$$\sum_{k_1}\sum_{k_2}\dots \sum_{k_{n-1}}f(a^1_{ik_1})\cdot f(a^2_{k_1k_2})\cdot \dots f(a^n_{k_{n-1}j})$$
is zero if and only if all its terms are zero, which means that
$$a^1_{ik_1}\cdot a^2_{k_1k_2}\cdot \dots a^n_{k_{n-1}j} = 0$$
for all combinations of indices summed over. Multiplying with some bounded constant does not change that fact, so

$$f(a^1_{ik_1})\cdot f(a^2_{k_1k_2})\cdot \dots f(a^n_{k_{n-1}j})=0 \implies a^1_{ik_1}\cdot a^2_{k_1k_2}\cdot \dots a^n_{k_{n-1}j}\cdot d^1_{ii}\cdot d^2_{k_1k_1}\cdot \dots d^n_{k_{n-1}k_{n-1}}= 0.$$
Finally summing only zeros we get
\begin{equation*}
\begin{aligned}
0&=\sum_{k_1}\sum_{k_2}\dots \sum_{k_{n-1}} a^1_{ik_1}\cdot a^2_{k_1k_2}\cdot \dots a^n_{k_{n-1}j}\cdot d^1_{ii}\cdot d^2_{k_1k_1}\cdot \dots d^n_{k_{n-1}k_{n-1}}\\
&=(\bm{D_1} \cdot \bm{A_1}\cdot \bm{D_2} \cdot \bm{A_2} \cdot \dots\, \bm{D_n} \cdot \bm{A_n})_{ij}
\end{aligned}
\end{equation*}
\end{proof}

\begin{proof}[Proof of Proposition \ref{thm:GL_der}]
With $\{\bm{o_l}\}_{l=0}^L$ (where $\bm{o_0}=\bm{x}$ and $\bm{o_L}=\bm{y}$) denoting the outputs and $\{\bm{i_l}\}_{l=1}^L$ the inputs of the activation functions $\{\Phi_l\}_{l=1}^L$, we can express Equation \eqref{eq:nn} recursively for $1\leq l \leq L$ as
\begin{equation*}
\begin{aligned}
\bm{i_l}&=\bm{W_l}\bm{o_{l-1}}+\bm{b_l}\\
\bm{o_l} &= \Phi_k(\bm{i_l})\\
\end{aligned}
\end{equation*}
Applying the chain rule we obtain
\begin{equation}
\begin{aligned}
\frac{\partial \bm{y}}{\partial \bm{x}} = \frac{\partial \bm{o_L}}{\partial \bm{o_0}} = \frac{\partial \bm{o_{L}}}{\partial \bm{i_{L}}} \cdot \frac{\partial \bm{i_{L}}}{\partial \bm{o_{L-1}}} \cdot \frac{\partial \bm{o_{L-1}}}{\partial \bm{i_{L-1}}} \cdot \frac{\partial \bm{i_{L-1}}}{\partial \bm{o_{L-2}}} \cdot \dots \frac{\partial \bm{o_{1}}}{\partial \bm{i_{1}}} \cdot \frac{\partial \bm{i_{1}}}{\partial \bm{o_{0}}},
\end{aligned}
\label{eq:deriv}
\end{equation}
where $\frac{\partial \bm{o_{l}}}{\partial \bm{i_{l}}}$ is a diagonal matrix, since $\Phi_l$ is an element-wise operator, and $\frac{\partial \bm{i_{l}}}{\partial \bm{o_{l-1}}}=\bm{W_l}$. We can now apply Lemma \ref{lemma:mat_sq} with $f(x)=x^2$ to Equation \eqref{eq:deriv} to obtain 
\begin{equation*}
\begin{aligned}
&(\Wgl)_{i_Li_0}=0\iff((\bm{W_{L}})^2\cdot (\bm{W_{L-1}})^2\cdot \dots (\bm{W_1})^2)_{i_Li_0} = 0 \\
\implies &0= \left(\frac{\partial \bm{o_{L}}}{\partial \bm{i_{L}}} \cdot \bm{W_L} \cdot \frac{\partial \bm{o_{L-1}}}{\partial \bm{i_{L-1}}} \cdot \bm{W_{L-1}} \cdot \dots \frac{\partial \bm{o_{1}}}{\partial \bm{i_{1}}} \cdot \bm{W_1}\right)_{i_Li_0}=\left(\frac{\partial \bm{y}}{\partial \bm{x}}\right)_{i_Li_0}
\end{aligned}
\end{equation*}
where the equivalence comes from the definition of $\Wgl$ and the implication from Lemma \ref{lemma:mat_sq}.
\end{proof}

\begin{proof}[Proof of Proposition \ref{thm:paths_to_links}]
Assuming the penalized path can be written as a product of penalized links, we get the following equation for the k-th path between nodes $i_0$ and $i_L$, in total $\prod_{l=0}^{L}d_k$ equations:
\begin{equation*}
\begin{aligned}
&p^k_{i_Li_0}\cdot\left(1-\frac{\alpha\lambda}{(\Wgl)_{i_Li_0}}\right)^+= |w^L_{i_Li_{L-1}^k}|\cdot |w^{L-1}_{i_{L-1}^ki_{L-2}^k}|\cdot \dots |w^1_{i_1^ki_0}|\cdot\left(1-\frac{\alpha\lambda}{(\Wgl)_{i_Li_0}}\right)^+\\
&= \softthresha{w^L_{i_Li_{L-1}^k}}{\alpha\lambda^L_{i_Li_{L-1}^k}}\cdot \dots \softthresha{w^1_{i_1^ki_0}}{\alpha\lambda^1_{i_1^ki_0}}\\
&=: \softthreshb{w^L_{i_Li_{L-1}^k}}{\tilde{w}^L_{i_Li_{L-1}^k}}\cdot \dots \softthreshb{w^1_{i_1^ki_0}}{\tilde{w}^1_{i_1^ki_0}}.\\
\end{aligned}
\end{equation*}
The second equality is our assumption, requiring that the proximal operator from Equation \eqref{eq:gl_prox} can be written as a product of proximal operators from Equation \eqref{eq:lasso_prox}. $\lambda^l_{i_l^ki_{l-1}^k}\in [0,|w^l_{i_l^ki_{l-1}^k}|/\alpha]$ is a, possibly unique, penalty for the weight $w^l_{i_l^ki_{l-1}^k}$ and $\tilde{w}^l_{i_l^ki_{l-1}^k}:=(|w^l_{i_l^ki_{l-1}^k}|-\alpha\lambda^l_{i_l^ki_{l-1}^k})^+$. The superscript $k$ on the indices marks that different paths go through different nodes in the inner layers.

Taking absolute values of all equations and summing over the equations where the paths belong to the same connection, i.e.\ they share values for $i_0$ and $i_L$ and thus have the same path penalty, we obtain, for a given combination of $i_0$ and $i_L$
\begin{equation*}
\left(\sum_{i_{L-1}=1}^{d_{L-1}} \dots \sum_{i_1=1}^{d_1}|w^L_{i_Li_{L-1}}|\cdot \dots |w^1_{i_1i_0}|\right)\cdot\left(1-\frac{\alpha\lambda}{(\Wgl)_{i_Li_0}}\right)^+= \sum_{i_{L-1}=1}^{d_{L-1}} \dots \sum_{i_1=1}^{d_1}\tilde{w}^L_{i_Li_{L-1}}\cdot \dots \tilde{w}^1_{i_1i_0}
\end{equation*}
which, using the definition of matrix multiplication can be written as
\begin{equation*}
\left(\prod_{l=L,\dots,1}|\bm{W_l}|\odot\left(1-\frac{\alpha\lambda}{(\Wgl)_{i_Li_0}}\right)^+\right)_{i_Li_0} =  \left(\prod_{l=L,\dots,1}\bm{\tilde{W}_l}\right)_{i_Li_0}.
\end{equation*}
Thus, solving Equation \eqref{eq:eq_syst_mat} results in (absolute) links values being shifted towards zero in such a way that when multiplied into paths, the paths have the correct penalization. The only thing left to do is to restore the signs of the links.
\end{proof}

\section{Modified Non-Negative Matrix Factorization}
\label{sec:nmf}
We solve the non-negative matrix factorization problem in Equation \eqref{eq:eq_syst_mat} with coordinate descent, using a modified version of the solver in python's scikit-learn module \citep{scikit-learn}, which in turn is based on work by \citet{cichocki2009fast} and \citet{hsieh2011fast}. The aim is to minimize $||\bm{V}-\bm{W}\cdot\prod_{i=1}^I \bm{M_i}\cdot \bm{H}||^2_F$ for $I\in\mathbb{N}_0$, keeping each entry in $\bm{W}$, $\bm{M_i}$ and $\bm{H}$ between zero and its seed.

Coordinate descent updates each matrix separately, keeping the others fixed. Since we can write $\bm{W}\cdot \prod_{i=1}^I \bm{M_i} \cdot \bm{H} = \bm{\tilde{W}}\cdot \bm{\tilde{M}} \cdot \bm{\tilde{H}}$, where the three matrices on the right hand side are products of the matrices of the left hand side, when updating a given matrix, we can always treat the problem as a product of only three matrices. Thus we need update rules for $\bm{\tilde{W}}$ and $\bm{\tilde{M}}$, since we can use the same algorithm for $\bm{\tilde{W}}$ and $\bm{\tilde{H}}$ by taking transpose. \citet{hsieh2011fast} describe the case for two matrices, i.e.\ $I=0$. Our contribution is the update rule for $\bm{\tilde{M}}$ (and trivially adding the upper constraint on the solution).

\subsection{Update Rule for \texorpdfstring{$\bm{\tilde{W}}$}{W}}
For $\bm{\tilde{W}}\in (\R^+)^{d_i\times d_r}$ we want to solve, adding the possibility to add element-wise $l_1$- and $l_2$-penalties:

\begin{equation*}
\begin{aligned}
&\min_{\bm{\tilde{W}}:\ 0\leq (\bm{\tilde{W}})_{ir} \leq (\bm{\tilde{W}_0})_{ir}}\frac{1}{2} ||\bm{V}-\bm{\tilde{W}}\bm{\tilde{H}}||_F^2 + \sum_{i,r}\left(\lambda_1(\bm{\tilde{W}})_{ir} + \frac{\lambda_2}{2}(\bm{\tilde{W}})_{ir}^2\right)\\
\end{aligned}
\end{equation*}
where $\bm{\tilde{W}_0}$ is the seed.
We update each element $(\bm{\tilde{W}})_{ir}$ by adding $s\bm{E_{ir}}$, for an optimal $s$, where $\bm{E_{ir}}$ is a matrix with all zeros, except element $(i,r)$, which is 1. Defining
\begin{equation*}
g^{\bm{\tilde{W}}}_{ir}(s):=\frac{1}{2}\sum_{j}((\bm{V})_{ij}-((\bm{\tilde{W}}+s\bm{E_{ir}})\bm{\tilde{H}})_{ij})^2+\lambda_1((\bm{\tilde{W}})_{ir}+s) + \frac{\lambda_2}{2}((\bm{\tilde{W}})_{ir}+s)^2
\end{equation*}
we get
\begin{equation*}
\begin{aligned}
g^{\bm{\tilde{W}}}_{ir}(s)&=\frac{1}{2}\sum_{j}((\bm{V})_{ij}-((\bm{\tilde{W}}+s\bm{E_{ir}})\bm{\tilde{H}})_{ij})^2+\lambda_1((\bm{\tilde{W}})_{ir}+s) + \frac{\lambda_2}{2}((\bm{\tilde{W}})_{ir}+s)^2\\
&=\frac{1}{2}\sum_{j}((\bm{V})_{ij}-(\bm{\tilde{W}}\bm{\tilde{H}})_{ij}-s(\bm{\tilde{H}})_{rj})^2+\lambda_1((\bm{\tilde{W}})_{ir}+s) + \frac{\lambda_2}{2}((\bm{\tilde{W}})_{ir}+s)^2\\
(g^{\bm{\tilde{W}}}_{ir})^\prime(s)&=\sum_{j}(-(\bm{V})_{ij}(\bm{\tilde{H}})_{rj}+(\bm{\tilde{W}}\bm{\tilde{H}})_{ij}(\bm{\tilde{H}})_{rj}+2s(\bm{\tilde{H}})_{rj}^2)+\lambda_1 + \lambda_2((\bm{\tilde{W}})_{ir}+s)\\
(g^{\bm{\tilde{W}}}_{ir})^{\prime\prime}(s)&=\sum_{j}(2(\bm{\tilde{H}})_{rj}^2)+\lambda_2\\
(g^{\bm{\tilde{W}}}_{ir})^\prime(0)&=\sum_{i,j}(-(\bm{V})_{ij}(\bm{\tilde{H}}^{\top})_{jr}+(\bm{\tilde{W}}\bm{\tilde{H}})_{ij}(\bm{\tilde{H}}^\top)_{jr})+\lambda_1+\lambda_2(\bm{\tilde{W}})_{ir}\\
&=(-\bm{V}\bm{\tilde{H}}^\top+\bm{\tilde{W}}\bm{\tilde{H}}\bm{\tilde{H}}^\top)_{ir}+\lambda_1+\lambda_2(\bm{\tilde{W}})_{ir}\\
(g^{\bm{\tilde{W}}}_{ir})^{\prime\prime}(0)&=\sum_j(\bm{\tilde{H}})_{rj}(\bm{\tilde{H}}^\top)_{jr}+\lambda_2 = (\bm{\tilde{H}}\bm{\tilde{H}}^\top)_{rr}+\lambda_2.
\end{aligned}
\end{equation*}

Since $g^{\bm{\tilde{W}}}_{ir}(s)$ is quadratic in $s$, its Taylor expansion only contains terms up to and including $s^2$:
\begin{equation*}
g^{\bm{\tilde{W}}}_{ir}(s) = g^{\bm{\tilde{W}}}_{ir}(0) + (g^{\bm{\tilde{W}}}_{ir})^\prime(0)\cdot s + \frac{1}{2}(g^{\bm{\tilde{W}}}_{ir})^{\prime\prime}(0)\cdot s^2
\end{equation*}
which is minimized at 
\begin{equation*}
s^* = -\frac{(g^{\bm{\tilde{W}}}_{ir})^\prime(0)}{(g^{\bm{\tilde{W}}}_{ir})^{\prime\prime}(0)}= \frac{(\bm{V} \bm{\tilde{H}}^\top - \bm{\tilde{W}}\bm{\tilde{H}}\bm{\tilde{H}}^\top)_{ir}-\lambda_1-\lambda_2(\bm{\tilde{W}})_{ir}}{(\bm{\tilde{H}}\bm{\tilde{H}}^\top)_{rr}+\lambda_2}
\end{equation*}
and, to make sure $(\bm{\tilde{W}})_{ir} \in [0,(\bm{\tilde{W}_0})_{ir}]$:
\begin{equation*}
(\bm{\tilde{W}})_{ir}^{t+1}=\max((\bm{\tilde{W}_0})_{ir},\min(0,(\bm{\tilde{W}})_{ir}^t+s^*)).
\end{equation*}

\subsection{Update Rule for \texorpdfstring{$\bm{\tilde{M}}$}{M}}
For $\bm{M}\in (\R^+)^{d_p\times d_r}$ the corresponding equations become 
\begin{equation*}
\begin{aligned}
&\min_{\bm{\tilde{M}}:\ 0\leq (\bm{\tilde{M}})_{pr} \leq (\bm{\tilde{M}_0})_{pr}}\frac{1}{2} ||\bm{V}-\bm{\tilde{W}}\bm{\tilde{M}}\bm{\tilde{H}}||_F^2 + \sum_{p,r}\left(\lambda_1(\bm{\tilde{M}})_{pr} + \frac{\lambda_2}{2}(\bm{\tilde{M}})_{pr}^2\right)
\end{aligned}
\end{equation*}

\begin{equation*}
\end{equation*}
\begin{equation*}
\begin{aligned}
g^{\bm{\tilde{M}}}_{pr}(s):=&\frac{1}{2}\sum_{i,j}((\bm{V})_{ij}-(\bm{\tilde{W}}(\bm{\tilde{M}}+s\bm{E_{pr}})\bm{\tilde{H}})_{ij})^2+\lambda_1((\bm{\tilde{M}})_{pr}+s) + \frac{\lambda_2}{2}((\bm{\tilde{M}})_{pr}+s)^2\\
=&\frac{1}{2}\sum_{i,j}((\bm{V})_{ij}-(\bm{\tilde{W}}\bm{\tilde{M}}\bm{\tilde{H}})_{ij}-s(\bm{\tilde{W}})_{ip}(\bm{\tilde{H}})_{rj})^2+\lambda_1((\bm{\tilde{M}})_{pr}+s)\\
&+ \frac{\lambda_2}{2}((\bm{\tilde{M}})_{pr}+s)^2\\
(g^{\bm{\tilde{M}}}_{pr})^\prime(s)=&\sum_{i,j}(-(\bm{V})_{ij}(\bm{\tilde{W}})_{ip}(\bm{\tilde{H}})_{rj}+(\bm{\tilde{W}}\bm{\tilde{M}}\bm{\tilde{H}})_{ij}(\bm{\tilde{W}})_{ip}(\bm{\tilde{H}})_{rj}+2s(\bm{\tilde{W}})_{ip}^2(\bm{\tilde{H}})_{rj}^2)\\
&+\lambda_1 + \lambda_2((\bm{\tilde{M}})_{pr}+s)\\
(g^{\bm{\tilde{M}}}_{pr})^{\prime\prime}(s)=&\sum_{i,j}(2(\bm{\tilde{W}})_{ip}(\bm{\tilde{H}})_{rj}^2)+\lambda_2\\
(g^{\bm{\tilde{M}}}_{pr})^\prime(0)=&\sum_{i,j}(-(\bm{V})_{ij}(\bm{\tilde{W}})_{ip}(\bm{\tilde{H}})_{rj}+(\bm{\tilde{W}}\bm{\tilde{M}}\bm{\tilde{H}})_{ij}(\bm{\tilde{W}})_{ip}(\bm{\tilde{H}})_{rj})+\lambda_1+\lambda_2\bm{(\tilde{M}})_{pr}\\
=&(-\bm{\tilde{W}}^\top \bm{V}\bm{\tilde{H}}^\top+\bm{\tilde{W}}^\top \bm{\tilde{W}}\bm{\tilde{M}}\bm{\tilde{H}}\bm{\tilde{H}}^\top)_{pr}+\lambda_1+\lambda_2(\bm{\tilde{M}})_{pr}\\
(g^{\bm{\tilde{M}}}_{pr})^{\prime\prime}(0)=&\sum_{i}(\bm{\tilde{W}}^\top)_{pi}(\bm{\tilde{W}})_{ip}\sum_j(\bm{\tilde{H}})_{rj}(\bm{\tilde{H}}^\top)_{jr}+\lambda_2 = (\bm{\tilde{W}}^\top \bm{\tilde{W}})_{pp}(\bm{\tilde{H}}\bm{\tilde{H}}^\top)_{rr}+\lambda_2\\
s^* =& \frac{(\bm{\tilde{W}}^\top \bm{V} \bm{\tilde{H}}^\top - \bm{\tilde{W}}^\top \bm{\tilde{W}}\bm{\tilde{M}}\bm{\tilde{H}}\bm{\tilde{H}}^\top)_{pr}-\lambda_1-\lambda_2(\bm{\tilde{M}})_{pr}}{(\bm{\tilde{W}}^\top \bm{\tilde{W}})_{pp}(\bm{\tilde{H}}\bm{\tilde{H}}^\top)_{rr}+\lambda_2}\\
(\bm{\tilde{M}})_{pr}^{t+1}=&\max((\bm{\tilde{M}_0})_{pr},\min(0,((\bm{\tilde{M}})_{ir})^t+s^*)).
\end{aligned}
\end{equation*}

\clearpage
\bibliography{refs}
\bibliographystyle{apalike}

\end{document}